\icmltitlerunning{Multi-objective Bandits: Optimizing the Generalized Gini Index}
\newtheorem{lemma}{Lemma}
\newtheorem{proposition}{Proposition}
\newtheorem{rem}{Remark}
\newtheorem{claim}{Claim}
\def\calE{\mathcal{E}}
\def\ext{\mathrm{ext}}
\def\OGDEstep{\mathrm{OGDEstep}}
\newcommand{\bw}{{\bf w}}
\newcommand{\bx}{{\bf x}}
\newcommand{\bs}{{\bf s}}
\newcommand{\by}{{\bf y}}
\newcommand{\bX}{{\bf X}}
\newcommand{\balpha}{{\bm \alpha}}
\newcommand{\bmu}{{\bm \mu}}
\newcommand{\prob}{\mathbb{P}} 
\newcommand{\exptd}{\mathbb{E}}
\newcommand{\IND}{\mathbf{1}}
\definecolor{darkgreen}{rgb}{0,0.9,0}
\definecolor{darkred}{rgb}{0.9,0,0}
\definecolor{blue-green}{rgb}{0.0, 0.87, 0.87}
\definecolor{bulgarianrose}{rgb}{0.28, 0.02, 0.03}
\definecolor{calpolypomonagreen}{rgb}{0.12, 0.3, 0.17}
\definecolor{mypink1}{rgb}{0.858, 0.188, 0.478}
\newcommand{\balazs}[1]{{\color{black}#1}}
\newcommand{\paul}[1]{{\color{black}#1}}
\newcommand{\highl}[1]{{\color{black}#1}}
\newcommand{\robi}[1]{{\color{black}#1}}
\newcommand{\bigO}{O}
\newcommand{\argmin}{\operatornamewithlimits{argmin}}
\newcommand\R{\mathbb{R}}   
\newcommand\N{\mathbb{N}}   
\newcommand{\Algo}[1]{\textsc{#1}}
\begin{document} 

\twocolumn[
\icmltitle{Multi-objective Bandits: Optimizing the Generalized Gini Index}




\begin{icmlauthorlist}
\icmlauthor{R\'{o}bert Busa-Fekete}{yahoo}
\icmlauthor{Bal\'azs Sz\"or\'enyi}{rgai,technion}
\icmlauthor{Paul Weng}{jie,jri}
\icmlauthor{Shie Mannor}{technion}
\end{icmlauthorlist}

\icmlaffiliation{rgai}{Research Group on AI, Hungarian Acad. Sci. and Univ. of Szeged, Szeged, Hungary}
\icmlaffiliation{technion}{Technion Institute of Technology, Haifa, Israel}
\icmlaffiliation{jie}{SYSU-CMU JIE, SEIT, SYSU, Guangzhou, P.R. China}
\icmlaffiliation{jri}{SYSU-CMU JRI, Shunde, P.R. China}
\icmlaffiliation{yahoo}{Yahoo Research, New York, NY, USA}

\icmlcorrespondingauthor{Paul Weng}{paul@weng.fr}

\icmlkeywords{Multi-armed bandits, Online convex optimization, Multi-objective}

\vskip 0.3in
]



\printAffiliationsAndNotice{\icmlEqualContribution} 

\begin{abstract} 
	We study the multi-armed bandit (MAB) problem where the agent receives a vectorial feedback that encodes many possibly competing objectives to be optimized. 
    The goal of the agent is to find a policy, which can optimize these objectives simultaneously in a fair way. 
    This multi-objective online optimization problem is formalized by using the Generalized Gini Index (GGI) aggregation function. 
    We propose an online gradient descent algorithm which exploits the convexity of the GGI aggregation function, and controls the exploration in a careful way achieving a distribution-free regret $\tilde{\bigO} (T^{-1/2} )$ with high probability. We test our algorithm on synthetic data as well as on an electric battery control problem where the goal is to trade off the use of the different cells of a battery in order to balance their respective degradation rates.
\end{abstract} 

\section{Introduction}
\label{intro}

The multi-armed bandit (MAB) problem (or bandit problem) refers to an iterative decision making problem in which an agent repeatedly chooses among $K$ options, metaphorically corresponding to pulling one of $K$ arms of a bandit machine. In each round, the agent receives a random payoff, which is a reward or a cost that depends on the arm being selected. The agent's goal is to optimize an evaluation metric, e.g., the \emph{error rate} (expected percentage of times a suboptimal arm is played) or the \emph{cumulative regret} (difference between the sum of payoffs obtained and the (expected) payoffs that could have been obtained by selecting the best arm in each round). In the \emph{stochastic} multi-armed bandit setup, the payoffs are assumed to obey fixed distributions that can vary with the arms but do not change with time. To achieve the desired goal, the agent has to tackle the classical exploration/exploitation dilemma: It has to properly balance the pulling of arms that were found to yield low costs in earlier rounds and the selection of arms that have not yet been tested often enough \cite{AuCeFi02,LaRo85}.

The bandit setup has become the standard modeling framework for many practical applications, such as online advertisement~\cite{Slivkins14} or medical treatment design \cite{Press09} to name a few. In these tasks, the feedback is formulated as a single real value.
However many real-world online learning problems are rather multi-objective, i.e., the feedback consists of a vectorial payoffs.
For example, in our motivating example, namely an electric battery control problem, the learner tries to discover 
a ``best" battery controller, which balances the degradation rates of the battery cells (i.e., components of a battery), among a set of controllers while facing a stochastic power demand. Besides, there are several studies published recently that consider multi-objective sequential decision problem under uncertainty~\citep{DrNo13,RoijersVWD13,MaYaJi13}.

In this paper, we formalize the multi-objective multi-armed bandit setting
where
the feedback received by the agent is in the form of a $D$-dimensional
cost vector. The goal here is to be both efficient, i.e., minimize the cumulative cost for each objective, and fair, i.e., balance the different objectives.
One natural way to ensure this is to try to find a cost vector on the Pareto front that is closest to the origin or to some other ideal point. A generalization of this approach (when using the infinite norm) is the Generalized Gini Index (GGI), a well-known inequality measure in economics \cite{Weymark81}.

GGI is convex, which suggests applying the Online Convex Optimization (OCO) techniques~\citep{Hazan16,Shw12}. However, a direct application of this technique may fail to optimize GGI under noise, because the objective can be only observed with a bias that is induced by the randomness of the cost vectors
and by the fact that the performance is measured by the function value of the average cost instead of the average of the costs' function value. The solution we propose is an online learning algorithm which is based on Online Gradient Descent (\Algo{OGD})~\cite{Zin03,Hazan16} with additional exploration that enables us to control the bias of the objective function.
We also show that its regret is almost optimal: up to a logarithmic factor, it matches the distribution-free lower bound of the stochastic bandit problem~\citep{AuCeFrSc03}, which naturally applies to our setup when the feedback is one-dimensional. 

The paper is organized as follows: after we introduce the formal learning setup, we briefly recall the necessary notions from multi-objective optimization in Section \ref{sec:mo}. Next, in Section \ref{sec:ggi}, GGI is introduced and some of its properties are described. In Sections \ref{sec:optimalpol} and \ref{sec:regret}, we present how to compute the optimal policy for GGI, and  define the regret notion. Section \ref{sec:main} contains our main results where we define our \Algo{OGD}-based algorithm and analyze its regret. In Section \ref{sec:expe}, we test our algorithm and demonstrate its versatility in synthetic and real-world battery-control experiments. In Section 9, we provide a survey of related work, and finally conclude the paper in Section 10.

\section{Formal setup}
\label{sec:setup}

The multi-armed or K-armed bandit problem is specified by real-valued random variables $X_1, \dots, X_K$ associated, respectively, with $K$ arms (that we simply identify by the numbers $1, \ldots , K$). In each time step $t$, the online learner selects one and obtains a random sample of the corresponding distributions.
These samples, which are called costs, are assumed to be independent of all previous actions and costs.\footnote{Our setup is motivated by a practical application where feedback is more naturally formulated in terms of cost. However the stochastic bandit problem is generally based on rewards, which can be easily turned into costs by using the transformation $x \mapsto 1-x$ assuming that the rewards are from $[0,1]$.}
The goal of the learner can be defined in different ways, such as minimizing the sum of costs over time~\cite{LaRo85,AuCeFi02}.

In the \emph{multi-objective} multi-armed bandit (MO-MAB) problem, costs are not 
scalar real values, but real vectors. 
More specifically, a D-objective K-armed bandit problem ($D\ge 2$, $K \ge 2$) is specified by K  real-valued multivariate random variables $\bX_1, \dots , \bX_K$ over $[0,1]^D$. Let $\bmu_{k} = \exptd [\bX_{k} ]$ denote the expected vectorial cost of arm $k$ where $\bmu_{k} = ( \mu_{k,1}, \dots , \mu_{k,D} ) $. Furthermore, $\bmu$ denotes the matrix whose rows are the $\bmu_k$'s.

In each time step the learner can select one of the arms and obtain a sample, which is a cost vector, from the corresponding distribution. 
Samples are also assumed to be independent over time and across the arms, but not necessarily across the components of cost vectors.  At time step $t$, $k_t$ denotes the index of the arm played by the learner and $\bX_{k_t}^{(t)}= (X_{k_t,1}^{(t)}, \dots X_{k_t,D}^{(t)} )$ the resulting payoff.
After playing $t$ time steps, the empirical estimate of the expected cost $\bmu_k$ of the $k$th arm is:


\begin{align}\label{eq:sampmean} 
\widehat{\bmu}_{k}^{(t)} = \frac{1}{T_k(t)}\sum_{\tau=1}^t \bX_{k_\tau}^{(\tau)} \mathbf 1(k_\tau = k)
\end{align}
where all operations are meant elementwise, 
$T_k(t)$ is the number of times the $k$th arm has been played (i.e., $T_k(t) = \sum_{\tau=1}^t \mathbf 1(k_\tau = k)$) and $\mathbf 1( \cdot )$ is the indicator function.


\section{Multi-objective optimization}
\label{sec:mo}

In order to complete the MO-MAB setting, we need to introduce the notion of optimality of the arms. First, we introduce the Pareto dominance relation $\preceq$ defined as follows, for any 
$\bm v, \bm v' \in \mathbb R^D$:
\begin{align}
\bm v \preceq \bm v' \Leftrightarrow \forall d=1, \ldots, D, v_d \le v'_d \enspace.
\end{align}

Let $\mathcal O \subseteq \mathbb R^D$ be a set of $D$-dimension  vectors. 
The {\em Pareto front} of $\mathcal O$, denoted $\mathcal O^*$, is the set of vectors such that:
\begin{align}
\bm v^* \in \mathcal O^* \Leftrightarrow \big( \forall \bm v \in \mathcal O, \bm v \preceq \bm v^* \Rightarrow \bm v = \bm v^* \big) \enspace.
\end{align}

In multi-objective optimization, one usually wants to compute the Pareto front, or search for a particular element of the Pareto front.
In practice, it may be costly (and even infeasible depending on the size of the solution space) to determine all the solutions of the Pareto front.
One may then prefer to directly aim for a particular solution in the Pareto front.
This problem is formalized as a single objective optimization problem, using an \emph{aggregation function}.

An aggregation (or scalarizing) function, which is a non-decreasing 
function $\phi : \mathbb R^D \to \mathbb R$, allows every vector to receive a scalar value to be optimized\footnote{A multivariate function $f : \mathbb R^D \to \mathbb R$ is said to be monotone (non-decreasing) if for all fixed $\bx, \bx^{\prime}\in \R^D$ such that $\bx \preceq \bx^{\prime}$ implies that $f(\bx) \le f( \bx^{\prime})$.  }.
The initial multi-objective problem is then rewritten as follows:
\begin{align}\label{eq:mono}
\min \phi(\bm v) ~~~~~
\text{s.t.} ~~~~~ \bm v \in \mathcal O \enspace .
\end{align}
A solution to this problem yields a particular solution on the Pareto front.
Note that if $\phi$ is not strictly increasing in every component, some care is needed to ensure that the solution of (\ref{eq:mono}) is on the Pareto front.

Different aggregation function can be used depending on the problem at hand, such as sum, weighted sum, $\min$, $\max$, (augmented) weighted Chebyshev norm \cite{SteuerChoo83}, Ordered Weighted Averages (OWA) \cite{Yager88} or Ordered Weighted Regret (OWR) \cite{OgryczakPernyWeng11ADT} and its weighted version \cite{OgryczakPernyWeng13}. In this study, we focus on the Generalized Gini Index (GGI)~\cite{Weymark81}, a special case of OWA.

\section{Generalized Gini Index}\label{sec:ggi}

For a given $n\in \mathbb N$, $[n]$ denotes the set $\{1, 2, \ldots, n\}$. 
The Generalized Gini Index (GGI)~\cite{Weymark81} is defined as follows for a cost vector $\bx = (x_1, \ldots, x_D) \in \mathbb R^D$:
\[
G_{\bw}( \bx ) = \sum_{d=1}^D w_d x_{\sigma(d)} = \bw^\intercal \bx_{\sigma}
\] 
where $\sigma\in \mathbb S_D$, which depends on $\bx$, is the permutation that sorts the components of $\bx$ in a decreasing order, 
$\bx_{\sigma} = ( x_{\sigma(1)}, \cdots, x_{\sigma(D)} )$ is the sorted vector 
and weights $w_i$'s are assumed to be non-increasing, i.e., $w_1 \ge w_2 \ge \ldots \ge w_D$. 
Given this assumption, $G_{\bw}( \bx ) = \max_{\pi\in \mathbb S_D} \bw^\intercal \bx_\pi = \max_{\pi\in \mathbb S_D} \bw_{\pi}^\intercal \bx$ and is therefore a \emph{piecewise-linear convex} function. Figure~\ref{fig:Gini example} illustrates GGI on a bi-objective optimization task.

To better understand GGI, we introduce its formulation in terms of Lorenz vectors.
The Lorenz vector of $\bx$ is the vector $\mathbf L(\bx) = (L_1(\bx), \ldots, L_D(\bx) )$ where $L_d(\bx)$ is the sum of the $d$ smallest components of $\bx$.
Then, GGI can be rewritten as follows: 
\begin{align} \label{eq:ggilorenz}
G_{\bw}( \bx ) = \sum_{d=1}^D w'_d L_d(\bx)
\end{align}
where $\bw' = (w'_1, \ldots, w'_D)$ is the vector defined by $\forall d \in [D], w'_d = w_d - w_{d+1}$ with $w_{D+1} = 0$.
Note that all the components of $\bw'$ are nonnegative as we assume that those of $\bw$ are non-increasing.

\Algo{GGI}\footnote{Note that in this paper GGI is expressed in terms of costs and therefore lower GGI values are preferred.} 
was originally introduced for quantifying the inequality of income distribution in economics. 
It is also known in statistics \cite{BuSz89} as a special case of Weighted Average Ordered Sample statistics, which does not require that weights be non-increasing and is therefore not necessarily convex.
GGI has been characterized by Weymark (\citeyear{Weymark81}). 
	It encodes both efficiency as it is monotone with Pareto dominance and fairness as it is non-increasing with Pigou-Dalton transfers (\citeyear{Pigou12,Dalton20}); 
they are two principles formulating natural requirements, which is an important reason why GGI became a well-established measure of balancedness.
Informally, a Pigou-Dalton transfer amounts to increasing a lower-valued objective while decreasing another higher-valued objective by the same quantity such that the order between the two objectives is not reversed.
The effect of such a transfer is to balance a cost vector.
Formally, GGI satisfies the following fairness property: $\forall \bx \mbox{ such that } x_i < x_j$,
\begin{align*}
\forall \epsilon \in (0, x_j-x_i), G_{\bw}(\bx + \epsilon \bm e_i - \epsilon \bm e_j ) \le G_{\bw}(\bf x )
\end{align*}
where $\bm e_i$ and $\bm e_j$ are two vectors of the canonical basis.
As a consequence, among vectors of equal sum, the best cost vector (w.r.t. GGI) is the one with \emph{equal values in all objectives} if feasible.

\begin{figure}[tb!]
\centering
\includegraphics[scale=.6]{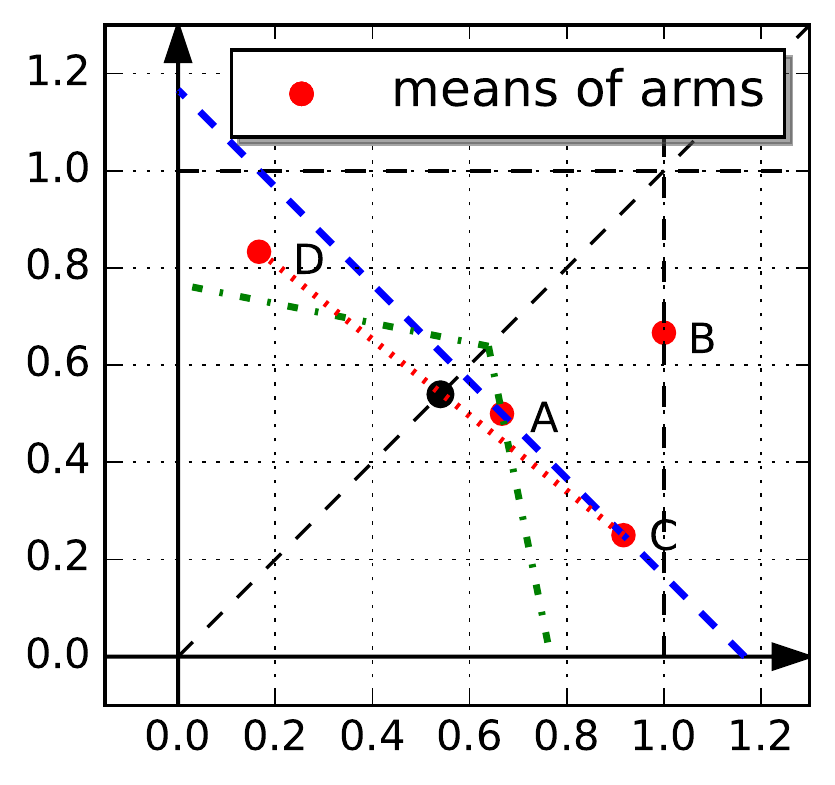}
\caption{Point A is always preferred (w.r.t. GGI) to B due to Pareto dominance; A is always preferred to C due to the Pigou-Dalton transfer principle; depending on the weights of GGI, A may be preferred to D or the other way around; with $\bw=(1, 1/2)$, A is preferred to D (points equivalent to A are on the dashed green line). The optimal mixed solution is the black dot.} 
\label{fig:Gini example}
\end{figure}

The original Gini index can be recovered as a special case of GGI by setting the weights as follows:
\begin{align}\label{eq:giniweights}
\forall d \in [D], w_d = (2(D-d) + 1)/D^2.
\end{align}
This yields a nice graphical interpretation.
For a fixed vector $\bx$, the distribution of costs can be represented as a curve connecting the points $(0, 0)$, $(1/D, L_1(\bx))$, $(2/D, L_2(\bx))$, \ldots, $(1, L_D(\bx))$.
An ideally fair distribution with an identical total cost is given by the straight line connecting $(0, 0)$, $(1/D, L_D(\bx)/D)$, $(2/D, 2L_D(\bx)/D)$, \ldots, $(1, L_D(\bx))$, which equally distributes the total cost over all components.
Then $1-G_\bw(\bx)/\bar x$ with weights (\ref{eq:giniweights}) and $\bar x = \sum x_i/D$ is equal to twice the area between the two curves.

From now on, to simplify the presentation, we focus on \Algo{GGI} 
with strictly decreasing weights in $[0,1]^D$, i.e., $d < d'$ implies $w_d > w_{d'}$. 
This means that GGI is strictly decreasing with Pigou-Dalton transfers and all the components of $\bw'$ are positive.
Based on formulation (\ref{eq:ggilorenz}), \citet{OgryczakSliwinski03} showed that the GGI value of a vector $\bx$ can be obtained by solving a linear program. We shall recall their results and define the linear program-based formulation of GGI.

\begin{proposition}
	\label{prop: dual LP for GGI}
	The \Algo{GGI} score $G_{\bw}( \bx )$ of vector $\bx$ is the optimal value of the following linear program
	\begin{equation*}
	\begin{array}{ll@{}ll}
	\text{minimize}  & \displaystyle \sum_{d=1}^{D} w_{d}^{\prime} \bigg( d r_d & + \sum\limits_{j=1}^{D} b_{j,d} \bigg) \\ 
	\text{subject to}& \displaystyle r_d + b_{j,d} & \ge x_j &\quad \forall j,d \in [D]  \\
	&  b_{j,d} &\ge 0 & \quad \forall j,d \in [D]
	\end{array}
	\end{equation*}
\end{proposition}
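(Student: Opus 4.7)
The plan is to combine the Lorenz reformulation~(\ref{eq:ggilorenz}) of $G_{\bw}$ with standard linear programming duality. Because $\bw$ is non-increasing, every coefficient $w'_d$ in~(\ref{eq:ggilorenz}) is non-negative, and the variables $(r_d,b_{\cdot,d})$ of the LP in the proposition---together with their constraints---decouple across the index $d$. Hence it suffices to establish the per-layer identity
\[
L_d(\bx) \;=\; \min\Big\{d\,r_d + \sum_{j=1}^{D} b_{j,d}\;:\; r_d + b_{j,d} \ge x_j\ \forall j,\ b_{j,d}\ge 0\Big\}
\]
for every $d \in [D]$; multiplying by $w'_d \ge 0$ and summing over $d$ then yields the LP in the statement.

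For this per-layer identity I would invoke LP duality. The variable $r_d$ is unrestricted in sign, so its dual counterpart is an equality constraint, while $b_{j,d}\ge 0$ produces a dual variable bounded above by $1$. The dual problem is therefore
\[
\max\Big\{\sum_{j=1}^{D} x_j z_j\;:\; \sum_{j=1}^{D} z_j = d,\ 0 \le z_j \le 1\Big\}.
\]
The feasible set is the $d$-th hypersimplex, whose extreme points are the $\{0,1\}$-vectors with exactly $d$ ones; the maximum is therefore attained by placing $z_j = 1$ on the indices of the $d$ largest entries of $\bx$ and equals $\sum_{i=1}^d x_{\sigma(i)} = L_d(\bx)$. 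Since the primal is feasible and bounded, strong duality closes the loop.

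A self-contained alternative avoids duality entirely. For fixed $r$, the inner minimum over $b_{j,d}\ge 0$ subject to $b_{j,d}\ge x_j - r$ equals $\sum_j (x_j-r)^+$, so the LP reduces to $\min_r\{\,d\,r + \sum_j (x_j-r)^+\}$. This piecewise-linear function of $r$ has slope $d-i$ on the interval between the $(i+1)$-th and $i$-th largest components of $\bx$, so its minimum is attained at $r = x_{\sigma(d)}$, with value $d\,x_{\sigma(d)} + \sum_{i=1}^d (x_{\sigma(i)} - x_{\sigma(d)}) = L_d(\bx)$. I do not expect a substantive obstacle here; the only care needed is the sign convention on the dual variables ($r_d$ free versus $b_{j,d}\ge 0$), and the use of $w'_d\ge 0$ to fuse the $D$ independent one-dimensional minimizations into the single joint LP stated in the proposition.
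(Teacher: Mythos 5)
Your proof is correct and is essentially the standard derivation behind this proposition, which the paper does not prove itself but recalls from \citet{OgryczakSliwinski03}: the decomposition over $d$ using $w'_d\ge 0$ together with the per-layer identity $\min_r\{d\,r+\sum_j (x_j-r)^+\}=\sum_{i=1}^d x_{\sigma(i)}$ (equivalently, LP duality over the hypersimplex) is exactly how that LP formulation is obtained. The only point worth noting is that your argument correctly treats $L_d(\bx)$ as the sum of the $d$ \emph{largest} components, as formulation~(\ref{eq:ggilorenz}) and the convexity of $G_{\bw}$ require; the paper's prose defining $L_d$ as the sum of the $d$ smallest components is an inconsistency on the paper's side, not a gap in your proof.
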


\section{Optimal policy}
\label{sec:optimalpol}

In the single objective case, arms are compared in terms of their means, which induce a total ordering over arms. 
In the multi-objective setting, we use the \Algo{GGI} criterion to compare arms. 
One can compute the \Algo{GGI} score of each arm $k$  as $G_{\bw} ( \bmu_k )$ if its vectorial mean $\bmu_k$ is known. 
Then an optimal arm $k^*$ minimizes the \Algo{GGI} score, i.e.,
\[
k^* \in \argmin_{k\in [K] } G_{\bw} ( \bmu_k ) \enspace .
\] 
However, in this work, we also consider {\em mixed} strategies, which can be defined as $\mathcal A = \{ \balpha \in \R^K \vert \sum_{k=1}^K \alpha_k = 1 \,\wedge\, 0 \preceq \balpha \}$, because they may allow to reach lower GGI values than any fixed arm (see Figure~\ref{fig:Gini example}).
A policy parameterized by $\balpha$ chooses arm $k$ with probability $\alpha_k$. 
\robi{An optimal mixed policy can then be obtained as follows:}
\begin{equation}
\label{eq: optimal GGI}
\balpha^* \in \argmin_{\balpha \in A} G_{\bw} \left( \sum_{k=1}^K \alpha_k \bmu_k \right) \enspace .
\end{equation}
In general, $G_{\bw}\left( \sum_{k=1}^K \alpha_k^* \bmu_k \right) \le G_{\bw} ( \bmu_{k^*} ) $, therefore using mixed strategies is justified in our setting.
Based on Proposition~\ref{prop: dual LP for GGI}, if the arms' means were known, $\balpha^{*}$ could be computed by solving the following linear program:
\begin{equation}\label{lp:central}
\begin{array}{ll@{}ll}
\text{minimize}  & \displaystyle\sum_{d=1}^D w^{\prime}_d \left( d r_d + \sum_{j=1}^D b_{j,d} \right) \\
\text{subject to}& \displaystyle r_d + b_{j,d} \ge \highl{\sum_{k=1}^K \alpha_k \mu_{k,j}} & \forall j,d \in [D] \\
& \highl{\sum_{k=1}^K \alpha_k = 1} \qquad \highl{\balpha  \succeq \mathbf{0}}\\
&b_{j,d}  \ge 0 & \forall j,d \in [D] 
\end{array}
\end{equation}

\section{Regret}
\label{sec:regret}

After playing $T$ rounds, the average cost can be written as
\[
\bar{\bX}^{(T)} = \frac{1}{T} \sum_{t=1}^{T} \bX_{k_t}^{(t)} \enspace .
\]
Our goal is to minimize the GGI index of this term.
Accordingly we expect the learner to collect costs so as their average in terms of GGI, that is, $G_{\bw} \left( \bar{\bX}^{(T)} \right)$ should be as small as possible. 
As shown in the previous section, for a given bandit instance with arm means $\bmu = ( \bmu_1, \dots ,\bmu_K )$, the optimal policy $\balpha^*$ achieves $G_{\bw} \left( \sum_{k=1}^K \alpha_k^* \bmu_k  \right) = G_{\bw} \left( \bmu \balpha^{*} \right)$ if the randomness of the costs are not taken into account. 
We consider the performance of the optimal policy as a reference value, and define the regret of the learner as the difference of the GGI of its average cost and the GGI of the optimal policy:
\begin{align}\label{eq:regret}
R^{(T)} = G_{\bw} \left( \bar{\bX}^{(T)} \right)  - G_{\bw} \left( \bmu \balpha^{*} \right) \enspace .
\end{align}
Note that GGI is a continuous function, therefore if the learner follows a policy $\balpha^{(T)}$ that is ``approaching'' $\balpha^{*}$ as $T \rightarrow \infty$, then the regret is vanishing. 

We shall also investigate a slightly different regret notion called {\em pseudo-regret}:
\begin{align}\label{eq:pseudo}
\overline{R}^{(T)} = G_{\bw} \left( \bmu \bar{\balpha}^{(T)} \right)  - G_{\bw} \left( \bmu \balpha^{*} \right) 
\end{align}
where $\bar{\balpha}^{(T)} = \tfrac{1}{T} \sum_{t=1}^{T}\balpha^{(t)}$. We will show that the difference between the regret and pseudo-regret of our algorithm is $\tilde{\bigO} (T^{-1/2})$ with high probability, thus having a high probability regret bound $\tilde{\bigO} (T^{-1/2})$ for one of them implies a regret bound  $\tilde{\bigO} (T^{-1/2})$ for the other one.
\begin{rem} 
	The single objective stochastic multi-armed bandit problem~\cite{AuCeFi02,BuCe12} can be naturally accommodated into our setup with $D=1$ and $w_1 = 1$. 
    In this case, $\balpha^*$ implements the pure strategy that always pulls the optimal arm with the highest mean denoted by $\mu_{k^*}$. 
    Thus $G_{\bw} \left( \bmu \balpha^{*} \right)= \mu_{k^*}$ in this case. 
    Assuming that the learner plays only with pure strategies, the pseudo-regret defined in (\ref{eq:pseudo}) can be written as:
	\[
	\overline{R}^{(T)} = \frac{1}{T} \sum_{t=1}^T \left(\mu_{k_t} - T\mu_{k^*} \right) = \frac{1}{T} \sum_{k=1}^{K} T_{k}(T) \left(\mu_k - \mu_{k^*}\right)
	\]
	which coincides with the single objective pseudo-regret (see for example \citep{AuCeFi02}), apart from the fact that we work with costs instead of rewards. 
    Therefore our notion of multi-objective pseudo-regret can be seen as a generalization of the single objective pseudo-regret.
\end{rem}

\begin{rem} 
Single-objective bandit algorithm can be applied in our multi-objective setting by transforming the multi-variate payoffs $\bX^{(t)}_{k_t}$ into a single real value $G_\bw(\bX^{(t)}_{k_t})$ in every time step $t$. However, in general, this approach fails to optimize GGI as formulated in (\ref{eq: optimal GGI}) due to GGI's non-linearity, even if the optimal policy is a pure strategy. 
Moreover, applying a multi-objective bandit algorithm such as MO-UCB \cite{DrNo13} would be inefficient as they were developed to find all Pareto-optimal arms and then to sample them uniformly at random. This approach may be reasonable to apply only when $\alpha_k^{*} = 1/ \# \mathcal{K}$ where $\mathcal{K} = \{k \in [K]: \alpha_k^*>0 \}$ contains all the Pareto optimal arms, which is clearly not the case for every MO-MAB instance.
\end{rem}



\section{Learning algorithm based on OCO}\label{sec:main}

In this section we propose an online learning algorithm called \Algo{MO-OGDE}, to 
optimize the regret defined in the previous section. Our method exploits the convexity of the GGI operator and formalizes the policy search problem as an online convex optimization problem, which is solved by Online Gradient Descent (OGD) ~\cite{Zin03} algorithm with projection to a gradually expanding truncated probability simplex.   
Then we shall provide a regret analysis of our method. Due to space limitation, the proofs are deferred to the appendix.

\subsection{\Algo{MO-OGDE}}



\begin{algorithm}[t!]
	\caption{\Algo{MO-OGDE}($\delta$)  }
	\label{alg:MO-OGDE}
	\begin{algorithmic}[1]
		\State Pull each arm once
		\State Set $\balpha^{(K+1)} = (1/K, \cdots , 1/K )$
		\For {rounds $t =K+1,K+2, \dots$}
		\State Choose an arm $k_t$ according to $\balpha^{(t)}$
		\State Observe the sample $\bX_{k_t}^{(t)}$ and compute $f^{(t)}$
		\State Set $\eta_t = \frac{\sqrt{2}}{(1-1/\sqrt{K})}\sqrt{\tfrac{\ln (2/\delta)}{t}}$ \label{line:eta}
		\State $\balpha^{(t+1)}=\OGDEstep(\balpha^{(t)}, \eta_t, \nabla f^{(t)} )$
		\EndFor
		\Return $\tfrac{1}{T} \sum_{t=1}^T \balpha^{(t)}$
	\end{algorithmic}
\end{algorithm}

Our objective function to be minimized can be viewed as a function of $\balpha$, i.e., $f (\balpha) = G_{ \bw} ( \bmu  \balpha )$ where the matrix $\bmu = ( \bmu_1, \dots ,\bmu_K )$ contains the means of the arm distributions as its columns. 
Note that the convexity of GGI implies the convexity of $f (\balpha)$. Since we play with mixed strategies, the domain of our optimization problem is the K-dimensional probability simplex $\Delta_K = \{ \balpha \in \R^K \vert \sum_{k=1}^K \alpha_k = 1 \,\wedge\, 0 \preceq \balpha \}$, which is a convex set. 
Then the gradient of $f( \balpha)$  with respect to $\alpha_k$ can be computed as
$
\frac{ \partial f( \balpha ) }{\partial \alpha_k } = \sum_{d=1}^{D} w_{d} \mu_{k,\pi(d)} 
$
 where 
$\pi$ is the permutation that sorts the components of $\bmu\balpha$ in a decreasing order. The means $\bmu_k$'s are not known but they can be estimated based on the costs observed so far as given in (\ref{eq:sampmean}). 
The objective function based on the empirical mean estimates is denoted by $f^{(t)}(\balpha ) = G_{\bw}(\widehat\bmu^{(t)} \balpha )$ where $\widehat{\bmu}^{(t)} = ( \widehat{\bmu}_1^{(t)}, \dots ,\widehat{\bmu}_K^{(t)} )$ contains the empirical estimates for $\bmu_1, \dots, \bmu_K$ in time step $t$ as its columns. 

Our Multi-Objective Online Gradient Descent algorithm with Exploration is defined in Algorithm \ref{alg:MO-OGDE}, which we shall refer to as \Algo{MO-OGDE}. Our algorithm is based on the well-known Online Gradient Descent~\cite{Zin03,Hazan16} that carries out the gradient step and the projection back onto the domain in each round. 
The \Algo{MO-OGDE} algorithm first pulls each arm at once as an initialization step. 
Then in each iteration, it chooses each arm $k$ with probability $\alpha^{(t)}_k$, and it computes $f^{(t)}$ based on the empirical mean estimates. 
Next, it carries out the gradient step based on $ \nabla f^{(t)}( \balpha^{(t)} )$ with  a step size $\eta_t$ as defined in line \ref{line:eta}, and computes the projection $\Pi_{\Delta_K^{\beta}}$ onto the nearest point of the convex set:  
\[ 
\Delta_K^{\beta} = \left\{ \balpha \in \R^K \vert \sum_{k=1}^K \alpha_k = 1 \,\wedge\, \beta/K \preceq \balpha \right\} 
\]
with $\beta = \eta_t$. 
The gradient step and projection are carried out using
\begin{align}
\label{eq:OGDE}
  \OGDEstep(\balpha,\eta,g) =
  \Pi_{\Delta_{K}^{\eta}} \left( \balpha - \eta g ( \balpha )    \right)
\end{align}


The key ingredient of \Algo{MO-OGDE} is the projection step onto the truncated probability simplex $\Delta_K^{\eta_t}$ which ensures that $\alpha_k^{(t)} > \eta_t/K$ for every $k\in [K]$ and $t\in [T]$. 
This forced exploration is indispensable in our setup, since the objective function $f( \balpha)$ depends on the means of the arm distributions, which are not known. 
To control the difference between $ f( \balpha)$ and $f^{(t)} ( \balpha )$, we need ``good'' estimates for $\bmu_1, \dots, \bmu_K$, which are obtained via forced exploration. That is why, the original \Algo{OGD}~\cite{Hazan16} algorithm, in general, fails to optimize GGI in our setting.
Our analysis, presented in the next section, focuses on the interplay between the forced exploration and the bias of the loss functions $f^{(1)}(\balpha), \dots, f^{(T)}(\balpha)$.

\subsection{Regret analysis}

The technical difficulty in optimizing GGI in an online fashion is that, in general, $f^{(t)}( \balpha ) - f( \balpha) ~ (= G_\paul{\bw}(\widehat{\bmu}^{(t)} \balpha) - G_\paul{\bw}(\bmu \balpha))$ is of order $\min_k (T_k(t))^{-1/2}$, which, unless all the arms are sampled a linear number of times, incurs a regret of the same order of magnitude, which is typically too large.
Nevertheless, an optimal policy $\balpha^{*}$ determines a convex combination of several arms in the form of $\bmu \balpha^{*}$, which is the optimal cost vector in terms of GGI given the arm distribution at hand. 
Let us denote ${\mathcal K} = \{ k \in [K] : \alpha^{*}_k > 0\}$. Note that $\#\mathcal{K} \le D$.
Moreover, arms in $[K] \setminus \mathcal{K}$ with an individual GGI lower than arms in $\mathcal K$ do not necessarily participate in the optimal combination. 
Obviously, an algorithm that achieves a $\bigO (T^{-1/2})$ needs to pull the arms in $\#\mathcal{K}$ linear time, and at the same time, estimate the arms in $[K] \setminus \mathcal{K}$ with a certain precision.

The main idea in the approach proposed in this paper is, despite the above remarks, to apply some online convex optimization algorithm on the current estimate $f^{(t)}(\balpha) = G_\paul{\bw}(\widehat{\bmu}^{(t)} \balpha)$ of the objective function $f(\balpha) = G_\paul{\bw}(\bmu \balpha)$, use forced exploration of order $T^{1/2}$, and finally  show that the estimate $f^{(t)}$ of the objective function has error $\tilde{O}(T^{-1/2})$ \emph{along the trajectory} generated by the online convex optimization algorithm.
In particular, we show that
\(
  f(\tfrac{1}{T}\sum_{t=1}^T\balpha^{(t)})
\leq
  \tfrac{1}{T}\sum_{t=1}^T f^{(t)}(\balpha^{(t)})
  +
  \tilde{O}(T^{-1/2})
\).
The intuitive reason for this is that
\(
 \| \tfrac{1}{T}\sum_{t=1}^T \widehat{\bmu}^{(t)}\balpha^{(t)}-\tfrac{1}{T}\sum_{t=1}^T\bmu\balpha^{(t)} \| = \tilde{O}(T^{-1/2})
\),
which is based on the following observation: an arm in $\mathcal{K}$ is either pulled often, thus its mean estimate is then accurate enough, or an arm in $[K]\setminus \mathcal{K}$ is pulled only a few times, nevertheless $\sum_{t=1}^T\alpha_k^{(t)}$ is then small enough to make the poor accuracy of its mean estimate insignificant. 
Below we make this argument formal by proving the following theorem:
\begin{restatable}{theorem}{primetheorem} \label{thm:main}
With probability at least $1- \delta $: 
\begin{align}
& f\Big( \tfrac{1}{T} \sum_{t=1}^T \balpha^{(t)} \Big) - f(\balpha^*) \le 
  2L\sqrt{\tfrac{6D\ln^3(8DKT^2/\delta)}{T}} 
  \notag \enspace ,
\end{align}
for any big enough $T$,
where $L$ is \paul{the} Lipschitz constant of $G_{\bw} (\bx )$.
\end{restatable}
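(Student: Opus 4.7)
The plan is to decompose $f(\bar\balpha^{(T)}) - f(\balpha^*)$ into an aggregated-estimation piece and a standard OGD regret piece, each of order $\tilde O(T^{-1/2})$. Using linearity $\bmu\bar\balpha^{(T)} = \tfrac1T\sum_t\bmu\balpha^{(t)}$, the $L$-Lipschitzness of $G_\bw$, and Jensen's inequality (convexity of $G_\bw$),
\[
f(\bar\balpha^{(T)}) \;\le\; G_\bw\Big(\tfrac1T\sum_t\widehat\bmu^{(t)}\balpha^{(t)}\Big) + L\Big\|\tfrac1T\sum_t(\bmu-\widehat\bmu^{(t)})\balpha^{(t)}\Big\| \;\le\; \tfrac1T\sum_tf^{(t)}(\balpha^{(t)}) + L\Big\|\tfrac1T\sum_t(\bmu-\widehat\bmu^{(t)})\balpha^{(t)}\Big\|,
\]
so the regret is bounded by $(\mathrm I):=L\|\tfrac1T\sum_t(\bmu-\widehat\bmu^{(t)})\balpha^{(t)}\|$ plus $(\mathrm{II}):=\tfrac1T\sum_tf^{(t)}(\balpha^{(t)}) - f(\balpha^*)$.

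For $(\mathrm I)$ I argue componentwise: the $(k,d)$-contribution is $\tfrac1T\sum_t\alpha_k^{(t)}(\widehat\mu^{(t)}_{k,d}-\mu_{k,d})$. The forced-exploration projection in Algorithm~\ref{alg:MO-OGDE} guarantees $\alpha_k^{(t)}\ge\eta_t/K=\Omega(1/(K\sqrt t))$, and a Chernoff bound gives $T_k(t)\ge c\sqrt t/K$ uniformly in $t$ with probability at least $1-\delta/4$. A peeling / dyadic union bound on Azuma--Hoeffding then yields $|\widehat\mu^{(t)}_{k,d}-\mu_{k,d}|=\tilde O(\sqrt{\ln(DKT/\delta)/T_k(t)})$ uniformly in $t$ on the same event (this is where most of the $\ln^{3/2}$ factor accumulates). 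Combining this with the standard summation identity
\[
\sum_{t=1}^T\frac{\alpha_k^{(t)}}{\sqrt{T_k(t)}}\;\le\;2\sqrt{\sum_{t=1}^T\alpha_k^{(t)}}\;\le\;2\sqrt T,
\]
one obtains $(\mathrm I)=\tilde O(L\sqrt{D/T})$ after dividing by $T$ and summing over $k\in[K]$ and $d\in[D]$.

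For $(\mathrm{II})$ I apply the textbook OGD analysis on the $L\|\bmu\|=O(L\sqrt D)$-Lipschitz convex functions $f^{(t)}$ with step sizes $\eta_t$ over the nested feasible sets $\Delta_K^{\eta_t}$. Since $\balpha^*$ may lie outside every $\Delta_K^{\eta_t}$, I introduce the projected reference $\balpha^{*,(t)}=\Pi_{\Delta_K^{\eta_t}}(\balpha^*)$, which satisfies $\|\balpha^{*,(t)}-\balpha^*\|=O(\eta_t)$ and $\sum_t\|\balpha^{*,(t+1)}-\balpha^{*,(t)}\|=O(\eta_1)$. A dynamic-regret telescoping of $\|\balpha^{(t)}-\balpha^{*,(t)}\|^2$ yields $\sum_t[f^{(t)}(\balpha^{(t)})-f^{(t)}(\balpha^{*,(t)})]=\tilde O(L\sqrt{DT})$, while the projection slack is $\sum_t[f^{(t)}(\balpha^{*,(t)})-f^{(t)}(\balpha^*)]=O(L\sum_t\eta_t)=\tilde O(L\sqrt T)$. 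The remaining residual $\sum_t[f^{(t)}(\balpha^*)-f(\balpha^*)]$, which involves only arms in the support $\mathcal K$ of $\balpha^*$, is treated by an Abel-summation rewriting $\sum_t\widehat\bmu^{(t)}_k = \sum_{i=1}^{T_k(T)}w_i\bX_k^{(t_i)}$ of the empirical means as a martingale combination of the raw samples, with Azuma--Hoeffding on the weights $w_i$ (controlled via $T_k(t)\ge c\sqrt t/K$ and the typical gap between consecutive pulls of arm $k$) giving an $\tilde O(T^{-1/2})$ bound after division by $T$.

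The main obstacle is the aggregation argument in $(\mathrm I)$: per round, $|f(\balpha^{(t)})-f^{(t)}(\balpha^{(t)})|$ can be as large as $\tilde O(t^{-1/4})$, so a naive round-by-round Lipschitz application would yield only $\tilde O(T^{-1/4})$ regret. Recovery of the optimal $T^{-1/2}$ rate hinges on first pushing the average inside $G_\bw$ via convexity and Lipschitzness, and only then invoking the telescoping identity $\sum_t\alpha_k^{(t)}/\sqrt{T_k(t)}=O(\sqrt T)$. Making this argument work together with the uniform-in-$t$ martingale concentration on the arm-mean estimates (on a high-probability event that also ensures $T_k(t)=\Omega(\sqrt t/K)$) is the technical crux of the proof; the resulting $\ln^{3/2}$ factor in the theorem reflects three successive union bounds, over $t\in[T]$, over the arms $k\in[K]$, and over the dyadic levels of $T_k(t)$.
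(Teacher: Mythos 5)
Your decomposition into an aggregated-estimation term $(\mathrm I)$ and an OGD term $(\mathrm{II})$ mirrors the paper's, and your treatment of $(\mathrm I)$ is sound (indeed, pushing the average inside $G_{\bw}$ and then using $\sum_t \alpha_k^{(t)}/\sqrt{T_k(t)} = O(\sqrt{\sum_t\alpha_k^{(t)}})$ is a somewhat cleaner route to the same $\tilde O(\sqrt{DT})$ bound than the paper's Abel-summation/decoupling argument in Proposition~\ref{prop:fuck}; both exploit the same cancellation between a poorly estimated arm's error $T_k(t)^{-1/2}$ and its small cumulative weight $\sum_\tau\alpha_k^{(\tau)}\approx T_k(t)$). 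The gap is in the residual $\sum_{t=1}^T\bigl[f^{(t)}(\balpha^*)-f(\balpha^*)\bigr]$. Here the weights multiplying $\widehat{\bmu}_k^{(t)}-\bmu_k$ are the \emph{constant} coordinates $\alpha_k^*$ of the fixed comparator, not the decaying $\alpha_k^{(t)}$, so no aggregation or Abel-summation trick can rescue you: you genuinely need $\|\widehat{\bmu}_k^{(t)}-\bmu_k\|=\tilde O(t^{-1/2})$, i.e.\ $T_k(t)=\Omega(t)$, for every arm $k$ in the support of $\balpha^*$. Forced exploration only gives $T_k(t)=\Omega(\sqrt t)$, hence $\|\widehat{\bmu}_k^{(t)}-\bmu_k\|=\tilde O(t^{-1/4})$ and $\sum_t t^{-1/4}=O(T^{3/4})$, which after dividing by $T$ yields only a $\tilde O(T^{-1/4})$ regret. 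This is exactly the $t^{-1/4}$ obstacle you flag for term $(\mathrm I)$, reappearing in a place where the fix you describe is unavailable.

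The paper closes this gap with a bootstrapping argument that your proposal does not contain. It first proves (Lemma~\ref{lem: g-star}) that $\Delta_K^*$ is a convex polytope and that the sharpness constant $g^*=\inf_{\balpha\notin\Delta_K^*}\max_{\balpha^*\in\Delta_K^*}\tfrac{f(\balpha)-f(\balpha^*)}{\|\balpha-\balpha^*\|}$ is strictly positive, using the piecewise-linearity of $G_\bw(\bmu\balpha)$ over the cells $A_\pi$. Then, in Proposition~\ref{prop:main}, the preliminary $\tilde O(T^{3/4})$ bound $\zeta^{\chi_0}(\tau)$ obtained from forced exploration alone is converted, via $g^*>0$, into convergence $\|\bar{\balpha}^{(\tau)}-\balpha^*\|\le\zeta^{\chi_0}(\tau)/(g^*\tau)\to0$; combined with the extreme-point constant $a_0$ this shows that every arm in the support of some extreme optimal point is pulled \emph{linearly} often, $T_k(\tau)\ge a_0\tau/(2|\ext(\Delta_K^*)|)$, once $\tau$ exceeds a problem-dependent threshold $\tau_1$. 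Only with this linear lower bound does the residual become $\tilde O(\sqrt T)$, and this is also the reason the theorem is stated ``for any big enough $T$.'' Without some version of this argument (or another mechanism forcing linear sampling of the support arms), your proof establishes only a $\tilde O(T^{-1/4})$ rate.
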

Its proof follows four subsequent steps presented next.

{\bf Step 1 } As a point of departure, we analyze  $\OGDEstep$ in \eqref{eq:OGDE}. In particular, we compute the regret that is commonly used in OCO setup for $f^{(t)} ( \balpha^{(t)})$.
\begin{restatable}{lemma}{primelemma}
\label{lemma:ogde}
	Define $\left\{ \balpha^{(t)} \right\}_{t=1,\dots, T}$ as:
    \begin{align*}
    \balpha^{(1)} &= (1/K, \dots , 1/K )\\
    \balpha^{(t+1)} &= \OGDEstep(\balpha^{(t)}, \eta_t, \nabla f^{(t)})
    \end{align*}
    with $\eta_1, \dots , \eta_T \in [0, 1]$.
    Then the following upper bound is guaranteed for all $T\ge 1$ and for any $\balpha \in \Delta_{K}$:
	\[
	\sum_{t=1}^{T} f^{(t)} (\balpha^{(t)}) - \sum_{t=1}^{T} f^{(t)} (\balpha) \le  \frac{1}{\eta_T} + \frac{G^2+1}{2} \sum_{t=1}^{T}\eta_t
	\]
	where $\sup_{\balpha\in \Delta_K} \| \nabla f^{(t)} (\balpha) \| < G \leq \sqrt{K}D$ for all $t\in [T]$.
\end{restatable}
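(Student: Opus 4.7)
The plan is to adapt the classical Zinkevich online-gradient-descent regret analysis to cope with the two non-standard features of $\OGDEstep$: the projection set $\Delta_K^{\eta_t}$ changes each round, and, more importantly, may fail to contain the comparator $\balpha\in\Delta_K$ whenever $\balpha$ has a coordinate below $\eta_t/K$. Writing $g^{(t)}=\nabla f^{(t)}(\balpha^{(t)})$ and using the non-expansiveness of $\Pi_{\Delta_K^{\eta_t}}$ together with convexity of $f^{(t)}$, one obtains the usual one-step inequality
\begin{equation*}
f^{(t)}(\balpha^{(t)}) - f^{(t)}(\balpha')
\le \frac{\|\balpha^{(t)}-\balpha'\|^2 - \|\balpha^{(t+1)}-\balpha'\|^2}{2\eta_t} + \frac{\eta_t}{2}\|g^{(t)}\|^2
\end{equation*}
for \emph{any} $\balpha'\in\Delta_K^{\eta_t}$, which is the only set over which non-expansiveness applies.

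To transfer this inequality to an arbitrary $\balpha\in\Delta_K$, I would introduce the surrogate comparator $\balpha_t^{\#}=(1-\eta_t)\balpha+(\eta_t/K)\mathbf 1$. Every coordinate of $\balpha_t^{\#}$ is at least $\eta_t/K$, so $\balpha_t^{\#}\in\Delta_K^{\eta_t}$, and a direct computation gives $\|\balpha_t^{\#}-\balpha\|\le\eta_t\sqrt{1-1/K}$. Then split
\begin{equation*}
f^{(t)}(\balpha^{(t)})-f^{(t)}(\balpha) = \bigl[f^{(t)}(\balpha^{(t)})-f^{(t)}(\balpha_t^{\#})\bigr] + \bigl[f^{(t)}(\balpha_t^{\#})-f^{(t)}(\balpha)\bigr].
\end{equation*}
Apply the OGD inequality above to the first bracket with $\balpha'=\balpha_t^{\#}$; for the second (``shift'') bracket, convexity of $f^{(t)}$ yields $f^{(t)}(\balpha_t^{\#})\le(1-\eta_t)f^{(t)}(\balpha)+\eta_t f^{(t)}(\mathbf 1/K)$, so the shift cost is at most $\eta_t\bigl(f^{(t)}(\mathbf 1/K)-f^{(t)}(\balpha)\bigr)$, which is $O(\eta_t)$ because $f^{(t)}$ takes values in a bounded interval (costs lie in $[0,1]^D$). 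This $O(\eta_t)$ overhead per round is precisely what upgrades the usual $G^2/2$ coefficient to the $(G^2+1)/2$ in the statement.

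Summing over $t$, I would telescope $\sum_t(\|\balpha^{(t)}-\balpha_t^{\#}\|^2 - \|\balpha^{(t+1)}-\balpha_t^{\#}\|^2)/(2\eta_t)$; because the comparator drifts, the telescope is not exact, but the drift $\balpha_{t+1}^{\#}-\balpha_t^{\#}=(\eta_t-\eta_{t+1})(\balpha-\mathbf 1/K)$ has norm $O(|\eta_t-\eta_{t+1}|)$, which a triangle-inequality expansion and Abel summation against the monotone sequence $1/\eta_t$ absorb into a $1/\eta_T$ term (using the simplex diameter $\|\balpha-\balpha'\|^2\le 2$). The gradient bound $\|\nabla f^{(t)}\|\le\sqrt K D$ is a side computation: the $k$-th coordinate of the subgradient equals $\sum_d w_d\widehat\mu_{k,\pi(d)}^{(t)}\le D$ for the sorting permutation $\pi$, since $w_d,\widehat\mu_{k,d}^{(t)}\in[0,1]$, so $\|\nabla f^{(t)}\|^2\le KD^2$. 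The main obstacle is picking a surrogate $\balpha_t^{\#}$ that simultaneously (i) sits in $\Delta_K^{\eta_t}$, (ii) incurs only $O(\eta_t)$ function-value overhead via convexity, and (iii) drifts by only $O(|\eta_t-\eta_{t+1}|)$ so as not to spoil the telescope; the explicit choice above threads this needle, after which the remaining steps are standard bookkeeping.
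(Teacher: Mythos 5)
Your shifted-comparator argument is a genuinely different route from the paper's. The paper keeps the comparator $\balpha\in\Delta_K$ fixed and instead inserts the \emph{unconstrained} projection $\Pi_{\Delta_K}\bigl(\balpha^{(t)}-\eta_t\nabla f^{(t)}(\balpha^{(t)})\bigr)$ between $\balpha^{(t+1)}=\Pi_{\Delta_K^{\eta_t}}\bigl(\balpha^{(t)}-\eta_t\nabla f^{(t)}(\balpha^{(t)})\bigr)$ and $\balpha$: the two projections of the \emph{same} point differ by at most $\eta_t/\sqrt{K}$, so the squared-distance recursion picks up only an additive $\eta_t^2/K$, i.e.\ $\eta_t/(2K)$ per round after dividing by $2\eta_t$, while the telescope over $\|\balpha^{(t)}-\balpha\|^2$ remains exact and produces the single $1/\eta_T$ term. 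Your route is sound and yields a bound of the same order, but it does not reproduce the stated constants, for two concrete reasons. First, the shift cost $\eta_t\bigl(f^{(t)}(\mathbf 1/K)-f^{(t)}(\balpha)\bigr)$ is bounded only by $D\eta_t$ (since $G_{\bw}$ evaluated at a vector with components in $[0,1]$ lies in $[0,\sum_d w_d]\subseteq[0,D]$), or by $G\eta_t\sqrt{1-1/K}$ via Lipschitzness --- not by $\eta_t/2$ --- so your per-round overhead gives a coefficient like $(G^2+2D)/2$ rather than the claimed $(G^2+1)/2$. Second, because $\balpha_t^{\#}$ drifts, the telescope leaves a remainder of order $\sum_{t\ge 2}\|\balpha_{t-1}^{\#}-\balpha_t^{\#}\|/\eta_t\le\sum_{t\ge 2}(\eta_{t-1}-\eta_t)/\eta_t\le(\eta_1-\eta_T)/\eta_T$ (times the simplex diameter), which costs an extra constant multiple of $1/\eta_T$ on top of the one you already have; your claim that Abel summation absorbs this into ``a $1/\eta_T$ term'' is correct only up to that constant, and both your argument and the paper's implicitly require $\eta_t$ to be non-increasing.

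Since Lemma~\ref{lemma:ogde} is consumed downstream only up to constant factors, these are cosmetic losses and your strategy would carry the rest of the analysis through; but as written it establishes $\sum_t f^{(t)}(\balpha^{(t)})-\sum_t f^{(t)}(\balpha)\le c_1/\eta_T+\tfrac{G^2+c_2}{2}\sum_t\eta_t$ with constants $c_1,c_2>1$, not the exact inequality in the statement. If you want the stated constants, switch to the paper's device of comparing the truncated and untruncated projections of the same gradient-step point rather than moving the comparator. Your side computation $G\le\sqrt{K}D$ coincides with the paper's.
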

The proof of Lemma \ref{lemma:ogde} is presented in Appendix \ref{app:a}. If the projection is carried out onto $\Delta_{K}$ according to the \Algo{OGE} algorithm instead of the truncated probability $\Delta_{K}^{\eta_t}$, the regret bound in Lemma \ref{lemma:ogde} would be improved only by a constant factor (see Theorem 3.1 in \cite{Hazan16}). As a side remark, note that  Lemma \ref{lemma:ogde} holds for arbitrary convex function since only the convexity of $f^{(t)}(\balpha)$ is used in the proof. 

\noindent {\bf Step 2 } Next, we show that  $f^{(t)}(\balpha)$ converges to $f (\balpha )$ as fast as $\tilde{O}(T^{-1/2})$ along the trajectory $\left\{ \balpha^{(t)} \right\}_{t=1,\dots, T}$ generated by \Algo{MO-OGDE}. 
\begin{restatable}{proposition}{propfuck}
\label{prop:fuck}
With probability at least $1-2(DT+1)K\delta$, 
\begin{align}
  \left|
    G_\bw\left(\frac{1}{T}\sum_{t=1}^T \bmu\balpha^{(t)}\right) \right.&\left.-~ G_\bw\left(\frac{1}{T}\sum_{t=1}^T \widehat{\bmu}^{(t)}\balpha^{(t)}\right)
  \right|  \notag \\
    & \leq    
    L\sqrt{\frac{6D(1+\ln^2 T)\ln(2/\delta)}{T}} \notag
\enspace .
\end{align}
\end{restatable}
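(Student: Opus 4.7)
The plan is to reduce the difference of GGI values to a norm of the difference of their arguments via Lipschitz continuity, and then to control that norm through concentration inequalities tailored to the forced-exploration structure of Algorithm \ref{alg:MO-OGDE}. Since $G_\bw$ is $L$-Lipschitz with respect to the Euclidean norm on $\mathbb{R}^D$ and $\|v\|_2 \leq \sqrt{D}\,\|v\|_\infty$, the left-hand side is at most
\[
L\sqrt{D}\,\max_{d \in [D]}\left|\frac{1}{T}\sum_{t=1}^T \sum_{k=1}^K \alpha_k^{(t)}\bigl(\widehat{\mu}_{k,d}^{(t)} - \mu_{k,d}\bigr)\right|,
\]
so it suffices to bound each coordinate-wise weighted deviation by $\widetilde{O}(T^{-1/2})$. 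Next, I would apply Hoeffding's inequality to the i.i.d.\ samples of each arm $k$ in each coordinate $d$ with a union bound over the sample count $n \in [T]$, the arm $k$, and the coordinate $d$. This yields, with failure probability at most $2DKT\delta$ (matching the stated one up to constants), the uniform event $|\widehat{\mu}_{k,d}^{(t)} - \mu_{k,d}| \leq \sqrt{\ln(2/\delta)/(2\,T_k(t))}$ for all $t,k,d$, valid once $T_k(t) \geq 1$, which is guaranteed by the $K$ initial pulls.

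On this good event the triangle inequality reduces the task to bounding $\sum_{t=1}^T \sum_k \alpha_k^{(t)}/\sqrt{T_k(t)}$, and here I would exploit the informal observation flagged in the main text: for each arm $k$, either $T_k(t)$ is large (so the per-step error $1/\sqrt{T_k(t)}$ is small) or $\sum_{t} \alpha_k^{(t)}$ is itself small (so its weighted contribution is small). Concretely, set $L_k(t) = \sum_{s=1}^t \alpha_k^{(s)}$; the forced exploration $\alpha_k^{(t)} \geq \eta_t/K$ combined with a Bernstein-type martingale bound on $\mathbf{1}(k_t = k) - \alpha_k^{(t)}$ yields $T_k(t) \gtrsim L_k(t)/2$ once $L_k(t)$ is at least logarithmic in $1/\delta$. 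The integral-comparison inequality $\sum_{t=1}^T \alpha_k^{(t)}/\sqrt{L_k(t)} \leq 2\sqrt{L_k(T)}$, combined with the constraint $\sum_k L_k(T) = T$ and Cauchy--Schwarz, then delivers $\sum_{t,k} \alpha_k^{(t)}/\sqrt{T_k(t)} = \widetilde{O}(\sqrt{T})$, with an extra $\sqrt{\ln T}$ coming from harmonic-type sums $\sum_t 1/T_{k_t}(t) = \sum_k \sum_{j=1}^{T_k(T)} 1/j \leq K(1+\ln T)$.

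The main obstacle will be the careful bookkeeping of these pieces, in particular the joint control of the predictable trajectory $L_k(t)$ and the stochastic pull counts $T_k(t)$ while keeping the logarithmic factors tight. The $\ln^2 T$ in the final bound arises as the product of the $\sqrt{\ln T}$ from the uniform Hoeffding union bound and the $\sqrt{\ln T}$ from the harmonic-sum control; the $\sqrt{D}$ factor comes from the norm conversion; the dependence on $K$ is folded into $\ln(2/\delta)$ through the stated failure probability $2(DT+1)K\delta$.
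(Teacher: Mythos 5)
Your proposal is sound, but it takes a genuinely different route from the paper's. The paper confronts the dependence between $\widehat{\bmu}^{(t)}$ and $\balpha^{(t)}$ by summation by parts: it rewrites $\sum_t \widehat{\bmu}_k^{(t)}\alpha_k^{(t)} - \bmu_k\sum_t\alpha_k^{(t)}$ as $\sum_{t=1}^{T-1}\bigl[\widehat{\bmu}_k^{(t)}-\widehat{\bmu}_k^{(t+1)}\bigr]\sum_{\tau\le t}\alpha_k^{(\tau)}$ plus a boundary term, re-indexes the increments by pull count so they become explicit weighted sums of the centered i.i.d.\ samples $Z_k^n-\bmu_k$, and applies Chernoff--Hoeffding to those weighted sums (conditioning on $T_k(T)=t$ and union-bounding over $t$); the cancellation among samples is what keeps each arm's deviation at $\tilde{O}(\sqrt{DT}\ln T)$. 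You instead forgo cancellation entirely: triangle inequality down to $\sum_{t,k}\alpha_k^{(t)}\bigl|\widehat{\mu}_{k,d}^{(t)}-\mu_{k,d}\bigr|$ on the uniform event $\bigl|\widehat{\mu}_{k,d}^{(t)}-\mu_{k,d}\bigr|\le\sqrt{\ln(2/\delta)/(2T_k(t))}$ (which is precisely the paper's event $\calE_\mu$ from Appendix C, proved there by the same union bound over pull counts), and then recover the rate from the forced exploration via $T_k(t)\gtrsim L_k(t)$, the deterministic estimate $\sum_t\alpha_k^{(t)}/\sqrt{L_k(t)}\le 2\sqrt{L_k(T)}$, and Cauchy--Schwarz with $\sum_k L_k(T)=T$. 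This is shorter, avoids the delicate re-indexing, and in fact sheds the $\ln^2T$ factor; what it costs is an extra $\sqrt{K}$ from Cauchy--Schwarz and a burn-in term, so it matches the stated right-hand side only once $T$ is large relative to $K$ --- tolerable given that Theorem~\ref{thm:main} is itself asserted only for big enough $T$, but it should be stated.

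One step deserves real care. Plain Azuma (Claim~\ref{clm: Azuma}) gives $|T_k(t)-L_k(t)|\le\sqrt{2t\ln(2/\delta)}$, while the projection only enforces $\alpha_k^{(t)}\ge\eta_t/K$, hence $L_k(t)\gtrsim\sqrt{2t\ln(2/\delta)}/K$; for $K\ge 3$ this does \emph{not} yield $T_k(t)\ge L_k(t)/2$. You therefore genuinely need the Freedman/Bernstein form with variance proxy $\sum_{s\le t}\alpha_k^{(s)}(1-\alpha_k^{(s)})\le L_k(t)$, uniformly over $t$, exactly as you anticipate --- this is the one place where ``Bernstein-type'' cannot be downgraded to Azuma. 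Your closing attribution of the $\ln^2T$ factor to your own argument is inaccurate (your bound simply does not produce it), but since you would be proving a stronger inequality, that is harmless.
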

The proof of Proposition \ref{prop:fuck} is deferred to Appendix  \ref{app:fuck}. Proposition \ref{prop:fuck}, combined with the fact that
\begin{align}
  & G_\paul{\bw}\left(\frac{1}{T}\sum_{t=1}^T \widehat{\bmu}^{(t)} \balpha^{(t)}\right)
 \notag \\ & ~~~~~~~~~~~~~~~~ 
  \leq \frac{1}{T}\sum_{t=1}^T G_\paul{\bw}(\widehat{\bmu}^{(t)}\balpha^{(t)})
=
  \frac{1}{T}\sum_{t=1}^T f^{(t)}( \balpha^{(t)}) \notag
\end{align}
where we used the convexity of GGI, and $G_\bw\left(\frac{1}{T}\sum_{t=1}^T \bmu\balpha^{(t)}\right) =  f\left( \frac{1}{T}\sum_{t=1}^T\balpha^{(t)} \right) = f\left( \bar{\balpha}^{(t)} \right)$ implies the following result.
\begin{restatable}{corollary}{corgofalp}
\label{cor:gofalp}
With probability at least $1-2(DT+1)K\delta$,
\begin{align}
  & f\big( \bar{\balpha}^{(t)} \big) 
  \leq
  \frac{1}{T}\sum_{t=1}^T f^{(t)}\big( \balpha^{(t)}\big)
  +
  L\sqrt{\frac{6D(1+\ln^2 T)\ln\tfrac{2}{\delta}}{T}}  \notag
\enspace.
\end{align}
\end{restatable}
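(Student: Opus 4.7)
The plan is to obtain the corollary as an essentially immediate consequence of Proposition \ref{prop:fuck}, combining it with two deterministic observations that the paper already hints at in the paragraph following Proposition \ref{prop:fuck}: the linearity of the map $\balpha \mapsto \bmu\balpha$ and the convexity of $G_\bw$. Because both are deterministic, the probability $1 - 2(DT+1)K\delta$ from Proposition \ref{prop:fuck} transfers directly to the corollary without any additional union bound.

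The first step is to rewrite the ``true'' term in Proposition \ref{prop:fuck}. Since $\bmu$ does not depend on $t$, we can pull it out of the average:
\begin{align*}
G_\bw\Big(\tfrac{1}{T}\sum_{t=1}^T \bmu\balpha^{(t)}\Big)
= G_\bw\Big(\bmu\,\tfrac{1}{T}\sum_{t=1}^T \balpha^{(t)}\Big)
= G_\bw\big(\bmu\,\bar{\balpha}^{(T)}\big)
= f\big(\bar{\balpha}^{(T)}\big),
\end{align*}
using the definitions $\bar{\balpha}^{(T)} = \tfrac{1}{T}\sum_{t=1}^T \balpha^{(t)}$ and $f(\balpha) = G_\bw(\bmu\balpha)$.

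The second step handles the ``empirical'' term. Here $\widehat{\bmu}^{(t)}$ \emph{does} depend on $t$, so we cannot factor it out. Instead, I would apply Jensen's inequality to the convex function $G_\bw$:
\begin{align*}
G_\bw\Big(\tfrac{1}{T}\sum_{t=1}^T \widehat{\bmu}^{(t)}\balpha^{(t)}\Big)
\le \tfrac{1}{T}\sum_{t=1}^T G_\bw\big(\widehat{\bmu}^{(t)}\balpha^{(t)}\big)
= \tfrac{1}{T}\sum_{t=1}^T f^{(t)}\big(\balpha^{(t)}\big),
\end{align*}
using the definition $f^{(t)}(\balpha) = G_\bw(\widehat{\bmu}^{(t)}\balpha)$. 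Chaining these two identities with the upper-bound direction of the absolute value in Proposition \ref{prop:fuck} yields the stated inequality on the same event.

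The only subtlety, and the ``main obstacle'' such as it is, lies in being careful about which direction of the absolute value is being used: we only need $f(\bar{\balpha}^{(T)}) - G_\bw(\tfrac{1}{T}\sum_t \widehat{\bmu}^{(t)}\balpha^{(t)}) \le L\sqrt{6D(1+\ln^2 T)\ln(2/\delta)/T}$, which is one half of Proposition \ref{prop:fuck}, and this combines with the Jensen inequality above (in the correct direction, since it upper-bounds the quantity we are subtracting). Beyond that, the derivation is purely algebraic, and no probabilistic work beyond Proposition \ref{prop:fuck} is needed.
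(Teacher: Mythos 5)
Your proposal is correct and matches the paper's own argument exactly: the paper derives the corollary from Proposition \ref{prop:fuck} by the same two observations, namely that $G_\bw\bigl(\tfrac{1}{T}\sum_{t=1}^T \bmu\balpha^{(t)}\bigr) = f(\bar{\balpha}^{(T)})$ by linearity of $\balpha \mapsto \bmu\balpha$, and that convexity of $G_\bw$ gives $G_\bw\bigl(\tfrac{1}{T}\sum_{t=1}^T \widehat{\bmu}^{(t)}\balpha^{(t)}\bigr) \le \tfrac{1}{T}\sum_{t=1}^T f^{(t)}(\balpha^{(t)})$. Your attention to using only the one-sided direction of the absolute-value bound, with no extra union bound, is likewise consistent with the paper.
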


{ \bf Step 3 } Next, we provide a regret bound for the pseudo-regret of \Algo{MO-OGDE} by using Lemma \ref{lemma:ogde} and Corollary \ref{cor:gofalp}.
To this end, we introduce some further notations. First of all, let
\[\Delta_K^* = \argmin_{\balpha \in \Delta_K} f(\balpha)\] denote the set of all the minimum points of $f$ over $\Delta_K$.
As we show later in Lemma~\ref{lem: g-star}, $\Delta_K^*$ is a convex polytope, and thus the set $\ext(\Delta_K^*)$ of its extreme points is finite.

\begin{restatable}{proposition}{mainprop}
\label{prop:main}
With probability at least $1- 4DT^2K\delta$: 
\begin{align}
& f\left( \frac{1}{T} \sum_{t=1}^T \balpha^{(t)} \right) - f(\balpha^*)
\le
L\sqrt{\frac{6D(1+\ln^2 T)\ln(2/\delta)}{T}} 
\notag \\
&~~~~~~~~+\frac{1}{T\eta_T} + \frac{KD^2+1}{T} \sum_{t=1}^{T}\eta_t
+
\frac{LK}{T}\sum_{t=1}^T\sqrt{\frac{D\ln(2/\delta)}{2\chi_1(t)}} \notag
\end{align}
where 
\(
\chi_1(t) = \IND(t \le \tau_1) + \IND(t >\tau_1)(ta_0/(2|\ext(\Delta_K^*)|)) 
\)
and 
$\tau_1 = 
\left[(2|\ext(\Delta_K^*)|)\left[ 2+\tfrac{10\sqrt{3}LKD^2}{g^*} \right]\sqrt{2\ln\tfrac{2}{\delta}}\right]^4
$
with 
$a_0 = \min_{\balpha \in \ext(\Delta_K^*)} \min_{k: \alpha_k>0} \alpha_k$ and 
\(
  g^*
=
    \inf_{\balpha \in \Delta_K\setminus\Delta_K^*}
    \;\max_{\balpha^* \in \Delta_K^*}
    \tfrac{f(\balpha)-f(\balpha^*)}{\|\balpha-\balpha^*\|}
$.
\end{restatable}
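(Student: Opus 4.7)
The plan is to bound $f(\bar\balpha^{(T)}) - f(\balpha^*)$ by chaining Corollary~\ref{cor:gofalp} (which converts the quantity of interest into a time-average of empirical losses $f^{(t)}(\balpha^{(t)})$, paid for by the first term of the claim), Lemma~\ref{lemma:ogde} (the OCO regret of the iterates $\balpha^{(t)}$ on the noisy loss sequence $f^{(t)}$ against a fixed comparator, producing the second and third terms), and a concentration-plus-pigeonhole argument that controls the residual bias $\tfrac{1}{T}\sum_t[f^{(t)}(\balpha^*)-f(\balpha^*)]$ along the trajectory (producing the fourth term).

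First I would apply Corollary~\ref{cor:gofalp} directly to obtain $f(\bar\balpha^{(T)}) \le \tfrac{1}{T}\sum_{t=1}^T f^{(t)}(\balpha^{(t)}) + L\sqrt{6D(1+\ln^2 T)\ln(2/\delta)/T}$ with probability $\ge 1-2(DT+1)K\delta$. Next I would pick $\balpha^* \in \ext(\Delta_K^*)$ (the specific choice being deferred to the pigeonhole step below) as the comparator in Lemma~\ref{lemma:ogde}. Each coordinate of $\nabla f^{(t)}$ is a convex combination of entries of $\widehat{\bmu}_k^{(t)} \in [0,1]^D$, so $|\partial_k f^{(t)}| \le D$ and $\|\nabla f^{(t)}\| \le \sqrt{K}D$. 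Lemma~\ref{lemma:ogde} then yields $\tfrac{1}{T}\sum_t f^{(t)}(\balpha^{(t)}) - \tfrac{1}{T}\sum_t f^{(t)}(\balpha^*) \le \tfrac{1}{T\eta_T} + \tfrac{KD^2+1}{2T}\sum_t\eta_t$. Adding and subtracting $f(\balpha^*)$ inside the comparator sum reduces the task to controlling the residual $\tfrac{1}{T}\sum_t[f^{(t)}(\balpha^*)-f(\balpha^*)]$.

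For this residual, the Lipschitz property of $G_\bw$ gives $|f^{(t)}(\balpha^*)-f(\balpha^*)| \le L\|(\widehat{\bmu}^{(t)}-\bmu)\balpha^*\| \le L\sum_{k \in \mathrm{supp}(\balpha^*)} \alpha_k^*\|\widehat{\bmu}_k^{(t)}-\bmu_k\|$, so only arms in the support of the chosen optimal policy contribute. Componentwise Hoeffding, union-bounded over $D$ coordinates, $K$ arms, and $T$ rounds, produces $\|\widehat{\bmu}_k^{(t)}-\bmu_k\| \le \sqrt{D\ln(2/\delta)/(2T_k(t))}$ simultaneously with probability $\ge 1-2DKT\delta$. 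Using $\alpha_k^*\le 1$ and $|\mathrm{supp}(\balpha^*)| \le K$ gives exactly the shape $\tfrac{LK}{T}\sum_t \sqrt{D\ln(2/\delta)/(2T_k(t))}$; it remains to lower-bound $T_k(t)$ by $\chi_1(t)$.

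This pull-count lower bound is the main obstacle. For $t \le \tau_1$ it is trivial since $\chi_1(t)=1$ and every arm is pulled during initialisation. For $t > \tau_1$, I would invoke the partial regret bound (the very estimate we are proving, truncated at time $t$) together with the error-bound constant $g^*$, which converts $f(\balpha^{(s)})-f(\balpha^*)$ into a distance to the optimal set, to conclude that $\balpha^{(s)}$ stays within a shrinking neighbourhood of the polytope $\Delta_K^*$ for all $s \in (\tau_1, t]$. Since $\Delta_K^*$ has only $|\ext(\Delta_K^*)|$ extreme points, writing each $\balpha^{(s)}$ as the sum of its projection onto $\Delta_K^*$ (a convex combination of extreme points) plus a small remainder and applying the pigeonhole principle, some fixed extreme point $\balpha^*$ carries total convex weight at least $(t-\tau_1)/|\ext(\Delta_K^*)|$ along the window; its positive coordinates are uniformly $\ge a_0$ by definition of $a_0$. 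Hence $\mathbb{E}[T_k(t)] \ge ta_0/(2|\ext(\Delta_K^*)|)$ for every $k \in \mathrm{supp}(\balpha^*)$, and a multiplicative Chernoff bound, union-bounded over the $T$ time steps at an additional $2KT^2\delta$ cost, upgrades this into the deterministic-looking $T_k(t) \ge ta_0/(2|\ext(\Delta_K^*)|) = \chi_1(t)$. The threshold $\tau_1$ is chosen as the smallest time after which the neighbourhood $\{\balpha : \|\balpha-\Delta_K^*\| \lesssim \text{(residual)}/g^*\}$ is tight enough to make this pigeonhole honest, which explains its explicit polynomial dependence on $L$, $K$, $D$, $1/g^*$, $|\ext(\Delta_K^*)|$, and $\ln(2/\delta)$. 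Summing the four probability budgets gives the stated failure probability $4DT^2K\delta$.
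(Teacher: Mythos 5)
Your Steps 1--3 match the paper's proof exactly: Corollary~\ref{cor:gofalp} supplies the first term, Lemma~\ref{lemma:ogde} (with $G\le \sqrt{K}D$) the second and third, and the residual $\tfrac1T\sum_t[f^{(t)}(\balpha^*)-f(\balpha^*)]$ is handled by Lipschitzness of $G_\bw$ plus Hoeffding with $T_k(t)$ in the denominator. The problem is Step 4, the lower bound $T_k(t)\ge\chi_1(t)$, which is where all the real work lies. You propose to ``invoke the partial regret bound (the very estimate we are proving, truncated at time $t$)'' to show the iterates stay near $\Delta_K^*$. As stated this is circular: the $\tilde O(t^{-1/2})$ regret bound requires the linear pull counts, and the linear pull counts require the regret bound. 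The paper breaks this circle with a bootstrap that your proposal never supplies and that is the one genuinely missing idea: the forced exploration $\alpha_k^{(t)}\ge\eta_t/K$ alone guarantees, via Claim~\ref{clm: Azuma}, that \emph{every} arm satisfies $T_k(t)\gtrsim\sqrt{t\ln(2/\delta)}$ (the function $\chi_0$), which fed into the generic technical lemma gives a crude but non-circular regret bound $\zeta^{\chi_0}(t)=O(t^{3/4})$, i.e.\ $f(\bar\balpha^{(t)})-f(\balpha^*)=O(t^{-1/4})$. Only then does the error-bound constant $g^*$ of Lemma~\ref{lem: g-star} convert this into $\|\bar\balpha^{(t)}-\balpha^*\|=O(t^{-1/4})$, hence $\sum_{s\le t}\alpha_k^{(s)}\ge t\alpha_k^*-O(t^{3/4})$, which is linear for $t>\tau_1$; this is exactly what fixes the exponent $4$ in $\tau_1$. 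Your proposal uses the forced exploration nowhere in Step 4, so the argument has no non-circular starting point.

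Two secondary issues. First, $g^*$ controls $f(\bar\balpha^{(s)})-f(\balpha^*)$ for the \emph{averaged} iterate, not $f(\balpha^{(s)})-f(\balpha^*)$ for individual iterates as you write; the paper works directly with $\tfrac1\tau\sum_{s\le\tau}\alpha_k^{(s)}\ge\alpha_k^*-\zeta^{\chi_0}(\tau)/(g^*\tau)$, which is also what you actually need. Second, your pigeonhole is over the time window and may select a different extreme point for each $t$, whereas the residual term requires a \emph{single} comparator whose support arms are all well sampled at \emph{all} times; the paper instead writes the (fixed) nearest optimal point $\balpha^*$ as a convex combination of extreme points and takes the one $\balpha^+$ with coefficient at least $1/|\ext(\Delta_K^*)|$, so that $\alpha_k^*\ge\alpha_k^+/|\ext(\Delta_K^*)|\ge a_0/|\ext(\Delta_K^*)|$ on $\mathrm{supp}(\balpha^+)$. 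Relatedly, a multiplicative Chernoff bound on $\mathbb{E}[T_k(t)]$ is not the right tool since the $\alpha_k^{(s)}$ are adapted random variables; the martingale inequality of Claim~\ref{clm: Azuma} (event $\calE_\alpha$) is what converts $\sum_s\alpha_k^{(s)}$ into $T_k(t)$.
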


The proof is deferred to Appendix \ref{app:propmain}. In the proof first we decompose the regret into various terms which can be upper bounded based on Lemma \ref{lemma:ogde} and Corollary \ref{cor:gofalp}. This implies that $\tfrac{1}{T} \sum_{t=1}^{T} \balpha_k^{(t)}$ will get arbitrarily close to some $\balpha_k^{*} \in \Delta_K^*$ 
if $T$ is big enough because, as we show later in Lemma~\ref{lem: g-star}, $g^*$ is strictly positive (see Appendix \ref{app:gstar}).
As a consequence, for a big enough $T> \tau_1$, the difference between the  $f^{(t)}(\balpha^*)$ and $f(\balpha^*)$ is vanishing as fast as $\bigO( T^{-1/2})$ which is crucial for a regret bound of order $T^{-1/2}$. 

{ \bf Step 4 } 
Finally, Theorem  \ref{thm:main} yields from Proposition \ref{prop:main}
by simplifying the right hand side. The calculation is presented in Appendix \ref{app:mainthm}.

\subsection{Regret vs. pseudo-regret}

Next, we upper-bound the difference of regret defined in (\ref{eq:regret}) and the pseudo-regret defined in (\ref{eq:pseudo}). 
To that aim, we first upper-bound the difference of $T_k(t)$ and $\sum_{\tau = 1}^t \alpha_k^{(\tau)}$.
\begin{claim}
For any $t=1, 2, \dots$ and any $k= 1, \dots, K$ it holds that
\(
  \prob
  \left[
    \left|T_k(t) - \sum_{\tau=1}^{t} \alpha_k^{(\tau)}\right| \geq \sqrt{2t\ln(2/\delta)} 
  \right]
\leq
  \delta
\).
\label{clm: Azuma}
\end{claim}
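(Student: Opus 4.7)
The plan is to recognize this as a standard Azuma--Hoeffding concentration bound on a bounded martingale difference sequence. The key point is that the arm $k_\tau$ played at round $\tau$ is drawn (independently of past randomness, given the history) from the distribution $\balpha^{(\tau)}$, while $\balpha^{(\tau)}$ itself is determined by observations strictly prior to round $\tau$.

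Concretely, I would let $\mathcal{F}_\tau$ denote the $\sigma$-algebra generated by $k_1, \bX_{k_1}^{(1)}, \ldots, k_\tau, \bX_{k_\tau}^{(\tau)}$ and define
\[
Y_\tau = \mathbf{1}(k_\tau = k) - \alpha_k^{(\tau)} \enspace .
\]
Since $\balpha^{(\tau)}$ is computed from $\mathcal{F}_{\tau-1}$ (via $\OGDEstep$ applied to quantities known at the end of round $\tau-1$), and since $k_\tau \mid \mathcal{F}_{\tau-1} \sim \balpha^{(\tau)}$, we have $\exptd[\mathbf{1}(k_\tau = k)\mid \mathcal{F}_{\tau-1}] = \alpha_k^{(\tau)}$, hence $\exptd[Y_\tau\mid \mathcal{F}_{\tau-1}] = 0$. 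Thus $(Y_\tau)_{\tau\ge 1}$ is a martingale difference sequence with respect to $(\mathcal{F}_\tau)$, and each $Y_\tau$ takes values in $[-1,1]$ (so $|Y_\tau|\le 1$ almost surely).

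Since $T_k(t) - \sum_{\tau=1}^t \alpha_k^{(\tau)} = \sum_{\tau=1}^t Y_\tau$, Azuma--Hoeffding yields
\[
\prob\!\left[\left|\sum_{\tau=1}^t Y_\tau\right| \ge \varepsilon\right] \le 2\exp\!\left(-\tfrac{\varepsilon^2}{2t}\right) \enspace .
\]
Setting $\varepsilon = \sqrt{2t\ln(2/\delta)}$ makes the right-hand side equal to $\delta$, which is exactly the stated bound.

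There is no real obstacle here beyond verifying the measurability: one only needs to confirm that $\balpha^{(\tau)}$ depends on the randomness only through $\mathcal{F}_{\tau-1}$, which is immediate from Algorithm~\ref{alg:MO-OGDE} since the OGD step uses only $\balpha^{(\tau-1)}$, $\eta_{\tau-1}$, and $\nabla f^{(\tau-1)}$, all of which are $\mathcal{F}_{\tau-1}$-measurable. Everything else is a direct invocation of a classical concentration inequality for bounded martingales.
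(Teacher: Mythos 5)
Your proof is correct and follows essentially the same route as the paper: both identify $\sum_{\tau=1}^{t}[\IND(k_\tau=k)-\alpha_k^{(\tau)}]$ as a martingale with increments bounded by $1$ and apply Azuma--Hoeffding with $\varepsilon=\sqrt{2t\ln(2/\delta)}$. Your version merely makes explicit the filtration and the $\mathcal{F}_{\tau-1}$-measurability of $\balpha^{(\tau)}$, which the paper leaves implicit.
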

\begin{proof}
As $\prob[k_t=k] = \alpha_k$, it holds that
$T_k(t) - \sum_{\tau=1}^{t} \alpha_k^{(\tau)} = \sum_{\tau=1}^{t} [\IND(k_\tau=k)-\alpha_k^{(\tau)}]$
is a martingale.
Besides, $|\IND(k_\tau=k)- \alpha_k|\leq 1$ by construction.
The claim then follows by Azuma's inequality.
\end{proof}

Based on Claim \ref{clm: Azuma} and Prop.~\ref{prop:fuck}, we upper-bound the difference between the pseudo-regret and regret of \Algo{MO-OGDE}.
\begin{restatable}{corollary}{regretcor}
\label{regret-presudo}
With probability at least $1-\delta$
\begin{align}
  & \vert R^{(T)} - \bar{R}^{(T)}\vert \le L\sqrt{\frac{12D\ln(4(DT+1)/\delta)}{T}} \notag
\end{align}
\end{restatable}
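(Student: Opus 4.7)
The plan is to reduce the claim to controlling $\|\bar{\bX}^{(T)} - \bmu\bar{\balpha}^{(T)}\|_2$ via the Lipschitz property of $G_\bw$, and then to apply a coordinate-wise martingale concentration argument in the spirit of Claim \ref{clm: Azuma}. Since
\[
R^{(T)} - \bar R^{(T)} = G_\bw(\bar{\bX}^{(T)}) - G_\bw(\bmu\bar{\balpha}^{(T)})
\]
and $G_\bw$ is $L$-Lipschitz (using, e.g., the representation $G_\bw(\bx) = \max_{\pi\in\mathbb{S}_D}\bw_\pi^\intercal\bx$), it suffices to bound $\|\bar{\bX}^{(T)} - \bmu\bar{\balpha}^{(T)}\|_2$.

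The key observation is that
\[
\bar{\bX}^{(T)} - \bmu\bar{\balpha}^{(T)} = \frac{1}{T}\sum_{t=1}^T \bigl(\bX_{k_t}^{(t)} - \bmu\balpha^{(t)}\bigr)
\]
is, coordinate by coordinate, a normalized sum of martingale differences. Indeed, $\balpha^{(t)}$ is measurable w.r.t.\ the natural filtration $\mathcal{F}_{t-1}$ generated by the history before round $t$, and conditionally on $\mathcal{F}_{t-1}$ we have $\exptd[\bX_{k_t}^{(t)}\mid\mathcal{F}_{t-1}] = \sum_k\alpha_k^{(t)}\bmu_k = \bmu\balpha^{(t)}$. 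Moreover, each coordinate difference $X_{k_t,d}^{(t)} - (\bmu\balpha^{(t)})_d$ is bounded by $1$ in absolute value because costs lie in $[0,1]^D$. Applying Azuma's inequality coordinate-wise, exactly as in the proof of Claim \ref{clm: Azuma}, together with a union bound over the $D$ coordinates, yields with probability at least $1-\delta$
\[
\|\bar{\bX}^{(T)} - \bmu\bar{\balpha}^{(T)}\|_2 \le \sqrt{D}\,\sqrt{\tfrac{2\ln(2D/\delta)}{T}},
\]
and multiplying by $L$ produces a bound of the same $\tilde{O}(L\sqrt{D/T})$ order as the stated corollary.

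The only substantive obstacle is matching the precise constants ($12D$ inside the square root and $4(DT+1)/\delta$ inside the logarithm), which correspond to the route suggested by the text of combining Claim \ref{clm: Azuma} with Prop.\ \ref{prop:fuck}. Concretely, one would instead decompose by the triangle inequality
\[
\bigl|G_\bw(\bar{\bX}^{(T)}) - G_\bw(\bmu\bar{\balpha}^{(T)})\bigr| \le \bigl|G_\bw(\bar{\bX}^{(T)}) - G_\bw(\tfrac{1}{T}{\textstyle\sum_t}\widehat{\bmu}^{(t)}\balpha^{(t)})\bigr| + \bigl|G_\bw(\tfrac{1}{T}{\textstyle\sum_t}\widehat{\bmu}^{(t)}\balpha^{(t)}) - G_\bw(\bmu\bar{\balpha}^{(T)})\bigr|,
\]
bounding the second term directly by Prop.\ \ref{prop:fuck} (which contributes the $(DT+1)K$-style failure events), and the first by Claim \ref{clm: Azuma} applied to the arm counts $T_k(T)$ together with the same Hoeffding-type concentration of $\widehat{\bmu}_k^{(T)}$ that is already used inside the proof of Prop.\ \ref{prop:fuck}. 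A final union bound over the two events, rescaled so the total failure probability is $\delta$, supplies the $4(DT+1)/\delta$ argument of the logarithm and the $12D$ prefactor. Either route delivers the required high-probability $\tilde{O}(T^{-1/2})$ bound.
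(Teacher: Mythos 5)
Your primary argument is correct, and it takes a genuinely different and more elementary route than the paper. The paper routes through the intermediate point $\widehat{\bmu}^{(T)}\bar{\balpha}^{(T)}$: it first writes $\bar{\bX}^{(T)}=\tfrac{1}{T}\sum_k T_k(T)\widehat{\bmu}_k^{(T)}$, uses Claim~\ref{clm: Azuma} to replace $T_k(T)$ by $\sum_t\alpha_k^{(t)}$ (costing a factor $K$), and then invokes the bound \eqref{eq: division bound 1} on $\|\widehat{\bmu}_k^{(T)}-\bmu_k\|\sum_\tau\alpha_k^{(\tau)}$, whose proof unions over the $T$ possible values of the random stopping count $T_k(T)$ --- this is where the $\ln(4(DT+1)/\delta)$ comes from. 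You instead observe that $T(\bar{\bX}^{(T)}-\bmu\bar{\balpha}^{(T)})=\sum_t(\bX_{k_t}^{(t)}-\bmu\balpha^{(t)})$ is already a bounded martingale coordinate-wise (since $\balpha^{(t)}$ is $\mathcal F_{t-1}$-measurable and $\exptd[\bX_{k_t}^{(t)}\mid\mathcal F_{t-1}]=\bmu\balpha^{(t)}$), apply Azuma once per coordinate with a union bound over $D$, and multiply by $L$; this sidesteps the empirical means entirely, requires neither Proposition~\ref{prop:fuck} nor the decoupling machinery, and yields $L\sqrt{2D\ln(2D/\delta)/T}$, which is \emph{strictly smaller} than the stated $L\sqrt{12D\ln(4(DT+1)/\delta)/T}$. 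So your worry about ``matching the precise constants'' is moot: a tighter upper bound implies the stated one, and your second, sketched decomposition (which is close to, though not identical with, the paper's, since the paper uses $\widehat{\bmu}^{(T)}\bar{\balpha}^{(T)}$ rather than $\tfrac{1}{T}\sum_t\widehat{\bmu}^{(t)}\balpha^{(t)}$ as the pivot) is not needed. The only caveat common to both your argument and the paper's is the handling of the $K$ deterministic initialization rounds, for which $\prob[k_t=k]=\alpha_k^{(t)}$ does not literally hold; this affects the martingale increments for $t\le K$ only and is absorbed into the same order of magnitude.
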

The proof of Corollary \ref{regret-presudo} is deferred to Appendix \ref{app:regret_pseudo}. According to Corollary \ref{regret-presudo}, the difference between the regret and pseudo regret is $\tilde{\bigO} (T^{-1/2})$ with high probability, hence Theorem \ref{thm:main} implies a $\tilde{\bigO} (T^{-1/2})$ bound for the regret of \Algo{MO-OGDE}.


\section{Experiments}\label{sec:expe}


To test our algorithm, we carried out two sets of experiments. In the first we generated synthetic data from multi-objective bandit instances with known parameters. In this way, we could compute the pseudo-regret (\ref{eq:pseudo}) 
and, thus investigate the empirical performance of the algorithms. 
In the second set of experiments, we run our algorithm on a complex multi-objective online optimization problem, namely an electric battery control problem.  
Before presenting those experiments, we introduce another algorithm that will serve as a baseline.

\subsection{A baseline method}

In the previous section we introduced a gradient-based approach that uses the mean estimates to approximate the gradient of the objective function. 
Nevertheless, using the mean estimates, the optimal policy can be directly approximated by solving the the linear program given in (\ref{lp:central}). 
We use the same exploration as \Algo{MO-OGDE}, see line \ref{line:eta} of Algorithm \ref{alg:MO-OGDE}. 
More concretely, the learner solves the following linear program in each time step $t$:
\begin{equation*}
\begin{array}{ll@{}ll}
\text{minimize}  & \displaystyle\sum_{d=1}^D w^{\prime}_d \left( d r_d + \sum_{j=1}^D b_{j,d} \right) \\
\text{subject to}& \displaystyle r_d + b_{j,d} \ge \sum_{k=1}^K \alpha_k  \highl{\widehat{\mu}^{(t)}_{k,j}} ~~& \forall j,d \in [D] \\
& \balpha^T \mathbf{1}  = 1 \\
&\balpha  \ge \highl{\eta_t/K}  \\
&b_{j,d}  \ge 0 \qquad \forall j,d \in [D] 
\end{array}
\end{equation*}
Note that the solution of the learner program above regarding $\balpha$ is in $\Delta_K^{\eta_t}$. 
We refer to this algorithm as \Algo{MO-LP}. 
Note that this approach is computationally expensive, since a linear program needs to be solved at each time step. 
But the policy of each step is always optimal restricted to the truncated simplex $\Delta_{K}^{\eta_t}$ with respect to the mean estimates, unlike the gradient descent method.

\subsection{Synthetic Experiments}

\begin{figure}[t!]
	\begin{center}
		\includegraphics[width=0.494\columnwidth]{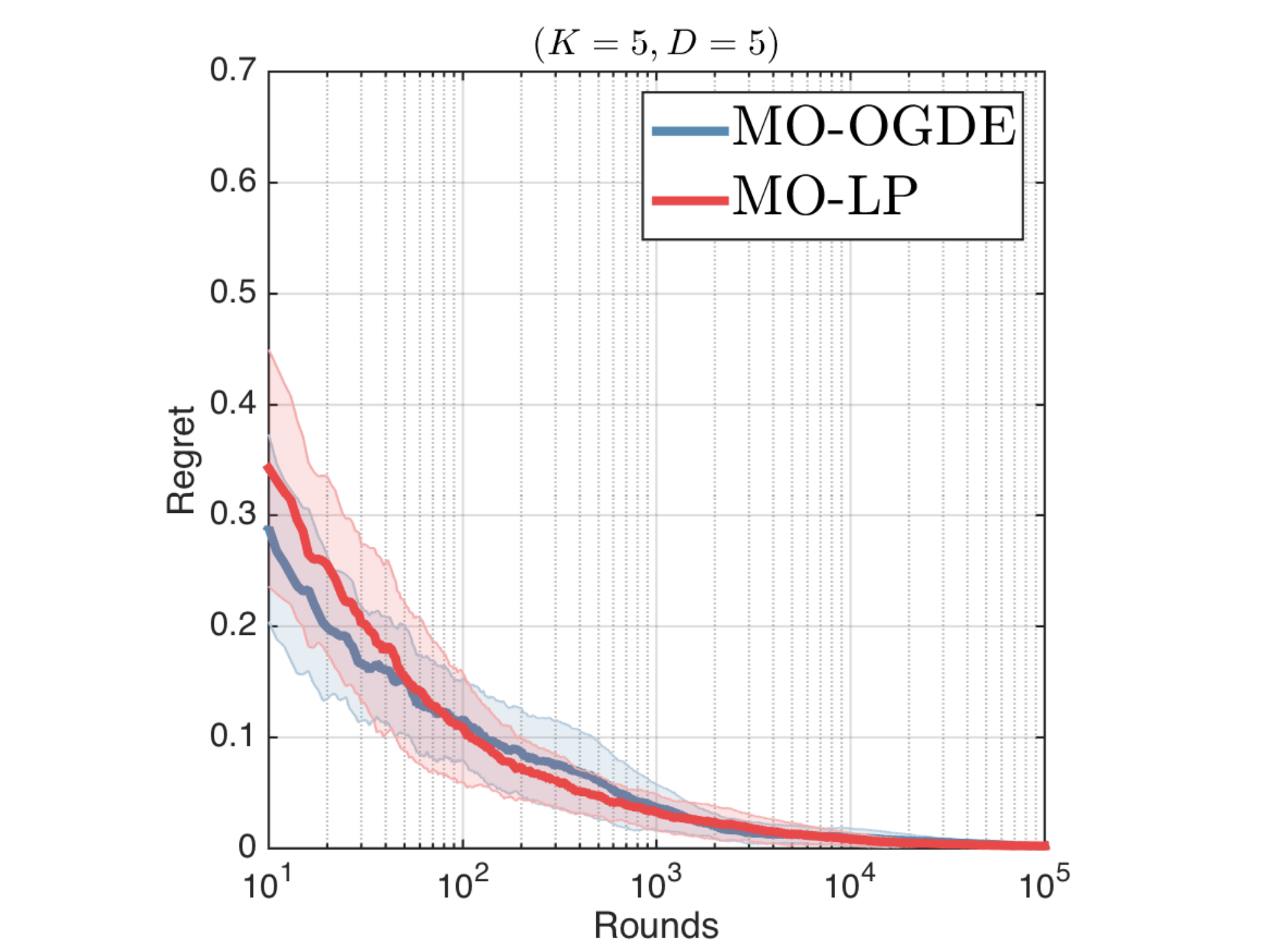}
		\includegraphics[width=0.494\columnwidth]{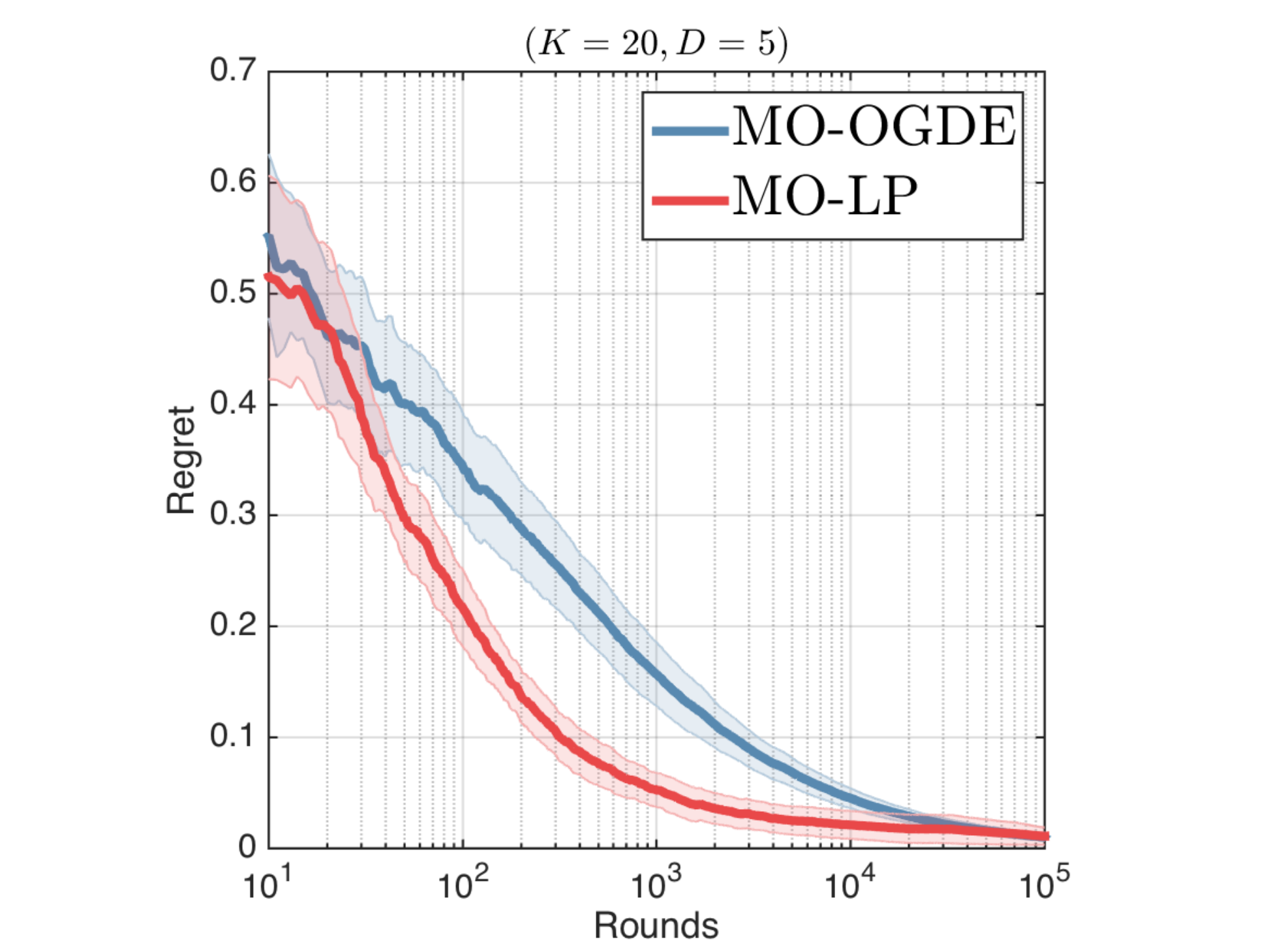}
		\includegraphics[width=0.494\columnwidth]{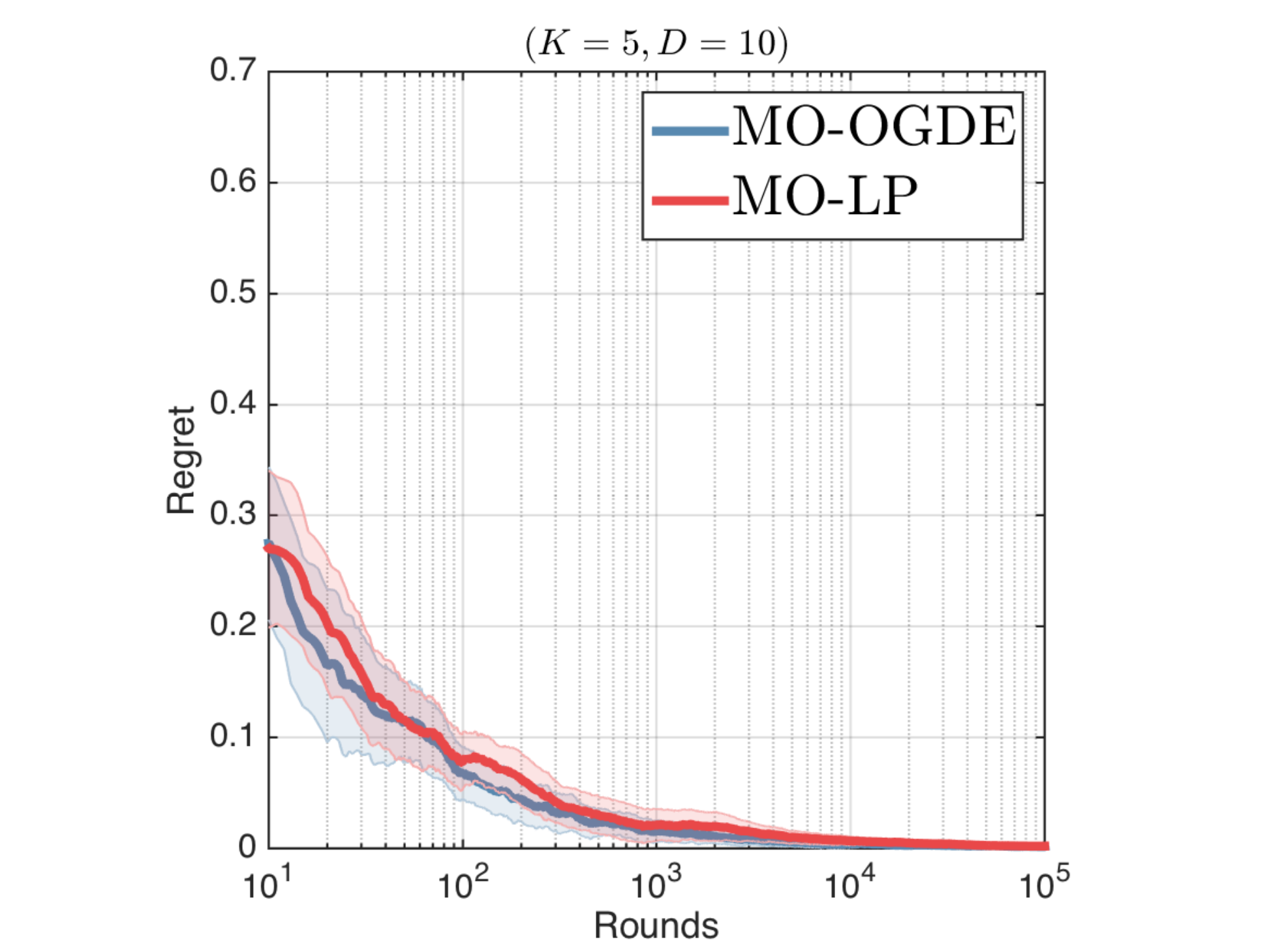}
		\includegraphics[width=0.494\columnwidth]{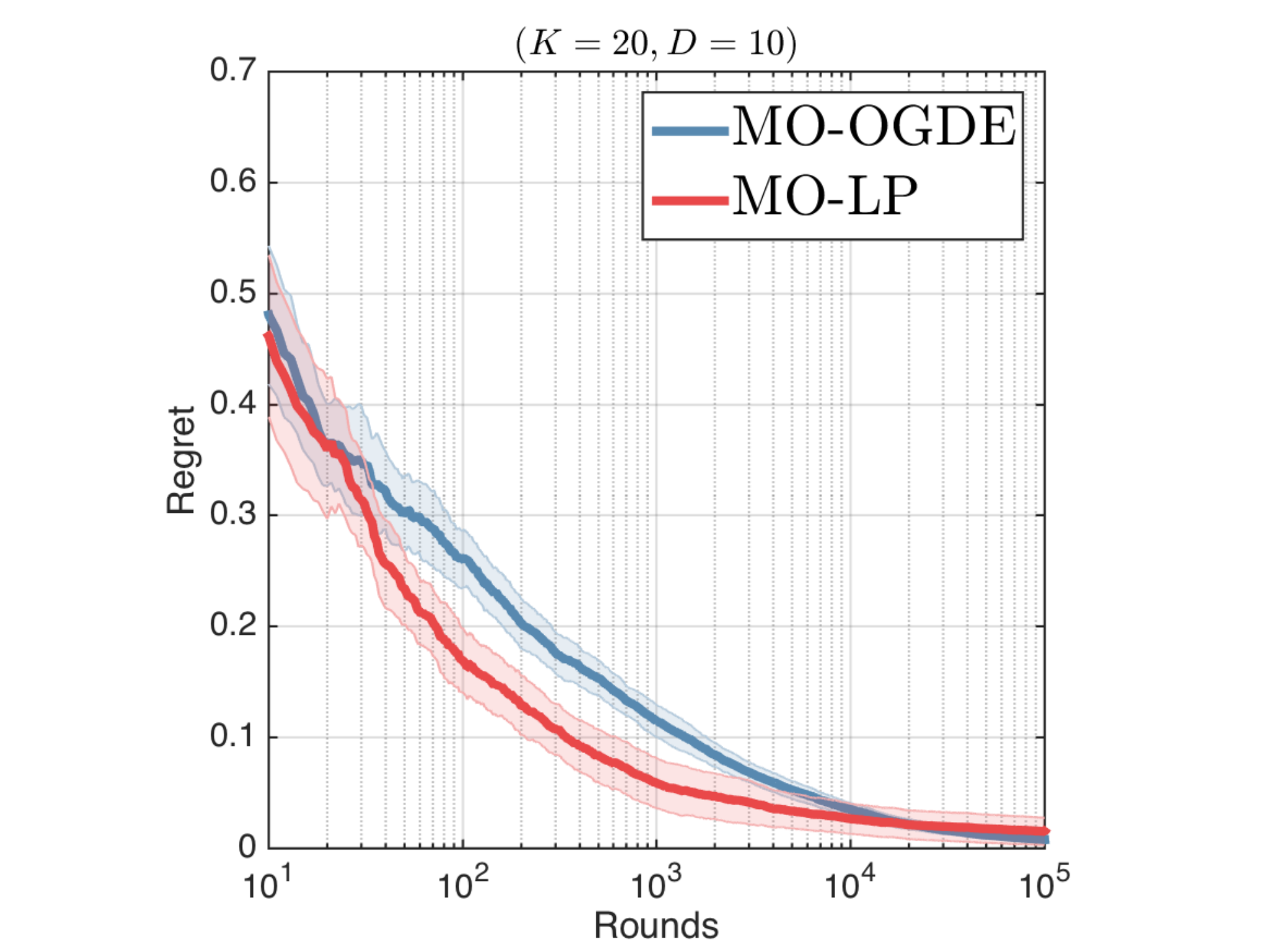}		
		\caption{The regret of the \Algo{MO-LP} and \Algo{MO-OGDE}. The regret and its error is computed based on $100$ repetitions and plotted in terms of the number of rounds. The dimension of the arm distributions was set to $D \in \{5,10\}$, which is indicated in the title of the panels. }
		\label{fig:synthetic}
	\end{center}
	\vskip -0.2in
\end{figure} 

We generated random multi-objective bandit instances for which each component of the multivariate cost distributions obeys Bernoulli distributions with various parameters. The parameters of each Bernoulli distributions are drawn uniformly at random from $[0,1]$ independently from each other. The number of arms $K$ was set to $\{5,20\}$ and the dimension of the cost distribution was taken from $D \in \{5,10\}$. The weight vector $\bw$ of  GGI was set to $w_d= 1/2^{d-1}$. Since the parameters of the bandit instance are known, the regret defined in Section \ref{sec:regret} can be computed. We ran the \Algo{MO-OGDE} and \Algo{MO-LP} algorithms with 100 repetitions. 
The multi-objective bandit instance were regenerated after each run. 
The regrets of the two algorithms, which are averaged out over the repetitions, are plotted in Figure \ref{fig:synthetic} along with the error bars. The results reveal some general trends. 
First, the average regrets of both algorithms converge to zero. 
Second the \Algo{MO-LP} algorithm outperforms the gradient descent algorithm for small number of round, typically $T < 5000$ on the more complex bandit instances ($K=20$). 
This fact might be explained by the fact that the \Algo{MO-LP} solves a linear program for estimating $\alpha^*$ whereas the \Algo{MO-OGDE} minimizes the same objective but using a gradient descent approach with projection, which might achieve slower convergence in terms of regret, nevertheless its computational time is significantly decreased compared to the baseline method. 

\subsection{Battery control task}

\begin{figure}[t!]
	\begin{center}
		\includegraphics[width=0.8\columnwidth]{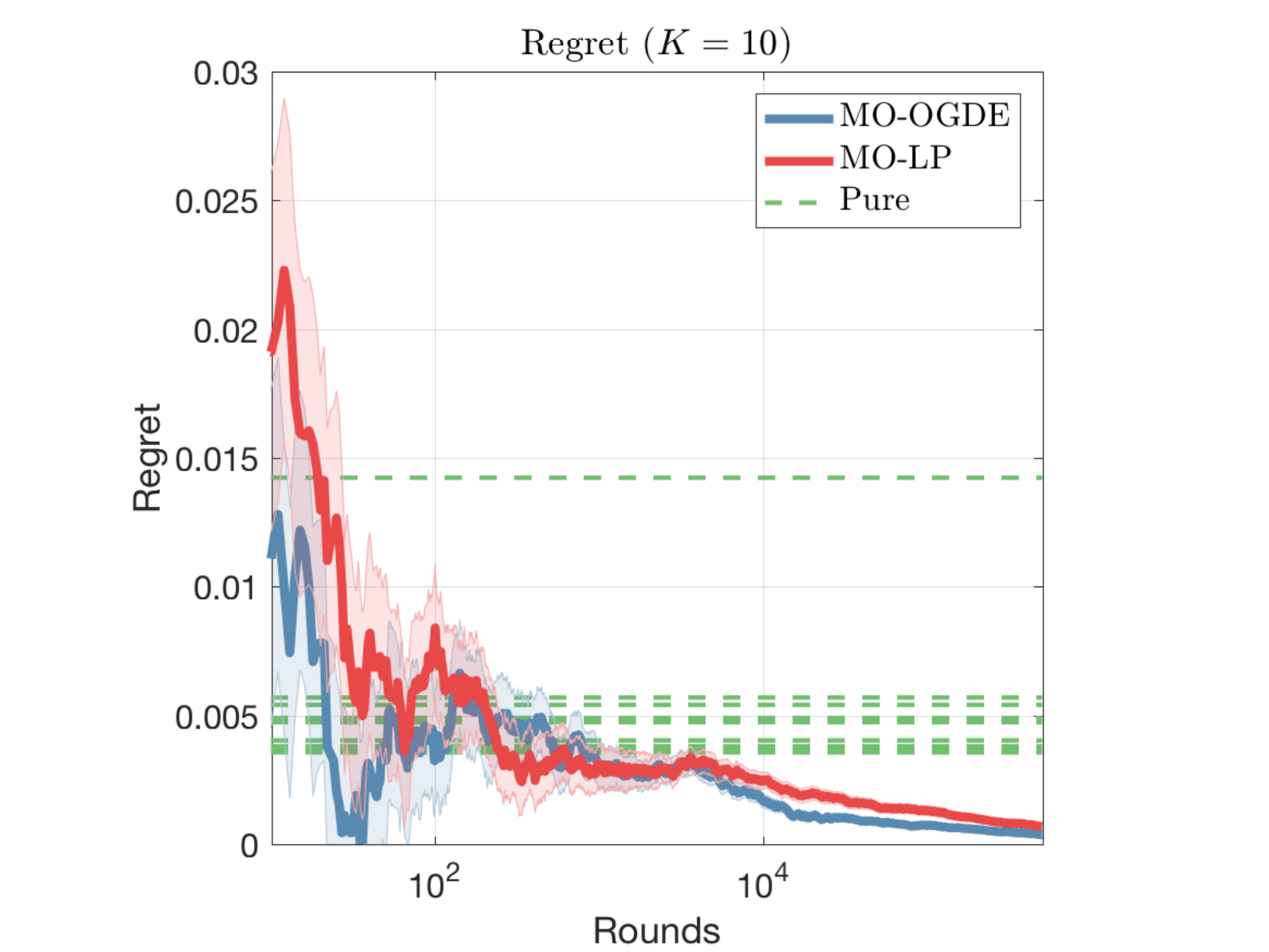}
		\caption{The regret of the \Algo{MO-OGDE} and \Algo{MO-LP} on the battery control task. The regret is averaged over $100$ repetitions and plotted in terms of the number of rounds. The dimension of the arm distributions was $D =12$. }
		\label{fig:battery}
	\end{center}
	\vskip -0.2in
\end{figure} 

We also tried our algorithms on a more realistic domain: the cell balancing problem.
As the performance profile of battery cells, subcomponents of an electric battery, may vary due to small physical and manufacturing differences, efficient balancing of those cells is needed for better performance and longer battery life.
We model this problem as a MO-MAB where the arms are different cell control strategies and the goal is to balance several objectives: state-of-charge (SOC), temperature and aging.
More concretely, the learner loops over the following two steps: (1) she chooses a control strategy for a short duration and (2) she observes its effect on the objectives (due to stochastic electric consumption). 
For the technical details of this
experiments see Appendix~\ref{app:battery}.

We tackled this problem as a GGI optimization problem.
The results (averaged over 100 runs) are presented in Figure~\ref{fig:battery} where we evaluated MO-OGDE vs. MO-LP.
The dashed green lines represent the regrets of playing fixed deterministic arms.
Although MO-OGDE and MO-LP both learn to play a mixed policy that is greatly better than any individual arm, MO-OGDE is computationally much more efficient.

\section{Related work}

The single-objective MAB problem has been intensively studied especially in recent years~\cite{BuCe12}, nevertheless there is only a very limited number of work concerning the multi-objective setting. 
To the best of our best knowledge, \citet{DrNo13} considered first the multi-objective multi-armed problem in a regret optimization framework with a stochastic assumption. 
Their work consists of extending the \Algo{UCB} algorithm~\cite{AuCeFi02} so as to be able to handle multi-dimensional feedback vectors with the goal of determining all arms on the Pareto front. 

\citet{AFFT14} investigated a sequential decision making problem with vectorial feedback. 
In their setup the agent is allowed to choose from a finite set of actions and then it observes the vectorial feedback for each action, thus it is a full information setup unlike our setup. Moreover, the feedback is non-stochastic in their setup, as it is chosen by an adversary. They propose an algorithm that can handle a general class of aggregation functions, such as the set of bounded domain, continuous, Lipschitz and quasi-concave functions.


In the online convex optimization setup with multiple objectives~\citep{MaYaJi13}, the learner's forecast $\bx^{(t)}$ is evaluated in terms of multiple convex loss functions $f_0^{(t)}(\bx), f_1^{(t)} (\bx), \dots, f_K^{(t)}(\bx)$ in each time step $t$. The goal of the learner is then to minimize $\sum_{\tau=1}^t f_0^{(\tau)}(\bx^{(\tau)})$ while keeping the other objectives below some predefined threshold, i.e. $\tfrac{1}{t}\sum_{\tau=1}^t f_i^{(\tau)}(\bx^{(\tau)}) \le \gamma_i$ for all $i \in [K]$. 
Note that, with linear loss functions, the multiple-objective convex optimization setup boils down to linear optimization with stochastic constraints, and thus it can be applied to solve the linear program given in Section \ref{sec:optimalpol} whose solution is the optimal policy in our setup. 
For doing this, however, each linear stochastic constraint needs to be observed, whereas we only assume bandit information.


In the approachability problem~\citep{MannorPS14,MannorTY09, AbernethyBH11}, there are two players, say A and B. Players A and B choose actions from the compact convex sets $\mathcal{X}\subset \R^K$ and $\mathcal{Y}\subset \R^D$, respectively. The feedback to the players is computed as a function $u:\mathcal{X} \times \mathcal{Y} \mapsto \R^p$. A given convex set $S\subset \R^p$ is \emph{known to both players}.
Player A wants to land inside with the cumulative payoffs, i.e., player A's goal is to minimize $\mbox{dist} ( \tfrac{1}{T} \sum_{t=1}^T u( \bx^{(t)}, \by^{(t)}), S)$  where $\bx^{(t)}$ and $\by^{(t)}$ are the actions chosen by player A and B respectively, and $\mbox{dist}( \bs, S ) = \inf_{\bs'\in S} \| \bs'-\bs\|$, 
whereas player B, called adversary, wants to prevent player A to land in set $S$.  In our setup, Player B who generates the cost vectors,  is assumed to be stochastic. 
The set $S$ consists of a single value which is $\bmu \balpha^*$, and $u$ corresponds to $\bmu Ibalpha$, thus $p=D$. What makes our setup essentially different from approachability is that, $S$ is not known to any player. That is why Player A, i.e. the learner, needs to explore the action space which is achieved by forced exploration. 

\vspace{-0.3cm}
\section{Conclusion and future work}
\label{sec:conc}

We introduced a new problem in the context of multi-objective multi-armed bandit (MOMAB). Contrary to most previously proposed approaches in MOMAB, we do not search for the Pareto front, instead we aim for a fair solution, which is important for instance when each objective corresponds to the payoff of a different agent. To encode fairness, we use the Generalized Gini Index (GGI), a well-known criterion developed in economics.
To optimize this criterion, we proposed a gradient-based algorithm that exploits the convexity of GGI. We evaluated our algorithm on two domains and obtained promising experimental results.

Several multi-objective reinforcement learning algorithm have been proposed in the literature~\citep{GaKaSz98,RoijersVWD13}. Most of these methods make use of a simple linear aggregation function. As a future work, it would be interesting to extend our work to the reinforcement learning setting, which would be useful to solve the electric battery control problem even more finely.



\section*{Acknowledgements}

\paul{The authors would like to thank Vikram Bhattacharjee and Orkun Karabasoglu for providing the battery model.}
This research was supported in part by the European Communitys Seventh Framework Programme (FP7/2007-2013)
under grant agreement 306638 (SUPREL).

\bibliography{momab}
\bibliographystyle{icml2017}

\newpage
\appendix
\onecolumn
\allowdisplaybreaks

\begin{center}
	{\LARGE Supplementary material for ``Multi-objective Bandits: Optimizing the Generalized Gini Index''}
\end{center}

For reading convenience, we restate all claims in the appendix.

\section{Lemma \ref{lemma:ogde} }
\label{app:a}

\primelemma*

\begin{proof}
	The proof follows closely the proof of Theorem 3.1 of \citet{Hazan16}, however the projection step is slightly different in our case. First, let us note that $\eta_t\le 1$ for all $t\in [T]$, thus $\Delta_{K}^{\eta_t}$ is never an empty set. Then, for an arbitrary $\balpha \in \Delta_K$, we have
	\begin{align}
	\| \balpha^{(t+1)} - \balpha \|^{2}
	& = \left\| \Pi_{\Delta_K^{\eta_t}} \left( \balpha^{(t)} - \eta_t \nabla_{\balpha} f^{(t)} (\balpha^{(t)})   \right) - \balpha \right\|^{2} \notag \\
	& = \left\|  \Pi_{\Delta_K^{\eta_t}} \left( \balpha^{(t)} - \eta_t \nabla_{\balpha} f^{(t)} (\balpha^{(t)})   \right) - \Pi_{\Delta_K} \left( \balpha^{(t)} - \eta_t \nabla_{\balpha} f^{(t)} (\balpha^{(t)})   \right) \right. \notag \\ & \left. ~~~~~+ \Pi_{\Delta_K} \left( \balpha^{(t)} - \eta_t \nabla_{\balpha} f^{(t)} (\balpha^{(t)})   \right) - \balpha \right\|^{2} \notag \\
	& \le \left\|  \Pi_{\Delta_K^{\eta_t}} \left( \balpha^{(t)} - \eta_t \nabla_{\balpha} f^{(t)} (\balpha^{(t)})   \right) - \Pi_{\Delta_K} \left( \balpha^{(t)} - \eta_t \nabla_{\balpha} f^{(t)} (\balpha^{(t)})   \right) \right\|^{2} \notag \\ & ~~~~~+\left\| \Pi_{\Delta_K} \left( \balpha^{(t)} - \eta_t \nabla_{\balpha} f^{(t)} (\balpha^{(t)})   \right) - \balpha \right\|^{2} \notag \\
	& \le \frac{\eta_t^2}{K}  + \left\| \Pi_{\Delta_K} \left( \balpha^{(t)} - \eta_t \nabla_{\balpha} f^{(t)} (\balpha^{(t)})   \right) - \balpha \right\|^{2} \notag\\
	& \le \frac{\eta_t^2}{K}  + \left\| \balpha^{(t)} - \eta_t \nabla_{\balpha} f^{(t)} (\balpha^{(t)})  - \balpha \right\|^{2} \label{eq:tmp11}
	\end{align}
	where (\ref{eq:tmp11}) follows from the convexity of the set $\Delta_K$. Thus we have
	\begin{align}
	\| \balpha^{(t+1)} - \balpha \|^{2} \le \| \balpha^{(t)} - \balpha \|^{2} + \eta_t^2 \| \nabla_{\balpha} f^{(t)} (\balpha^{(t)}) \|^2 - 2 \eta_t \left( \nabla_{\balpha} f^{(t)} (\balpha^{(t)}) \right)^{\intercal} \left(  \balpha^{(t)} - \balpha \right) + \frac{\eta_t^2}{K} \notag
	\end{align}
	The convexity of $f^{(t)}$ implies that
	\begin{align}
	f^{(t)}( \balpha^{(t)} ) - f^{(t)}( \balpha ) &
	\le 
	\left( \nabla_{\balpha} f^{(t)} (\balpha^{(t)}) \right)^{\intercal} \left(  \balpha^{(t)} - \balpha \right) \notag \\
	&  \le \frac{ \| \balpha^{(t)} - \balpha \|^{2} - \| \balpha^{(t+1)} - \balpha \|^{2}}{2\eta_t} + \frac{\eta_t}{2} G^2  + \frac{\eta_t}{2K}  \label{eq:tmp2121}
	\end{align}
	Therefore the regret can be upper bounded as
	\begin{align}
	\sum_{t=1}^{T} (f^{(t)}( \balpha^{(t)} ) - f^{(t)}( \balpha )) 
	& \le \sum_{t=1}^{T}  \frac{ \| \balpha^{(t)} - \balpha \|^{2} - \| \balpha^{(t+1)} - \balpha \|^{2}}{2\eta_t} + \frac{G^2+1}{2} \sum_{t=1}^{T}\eta_t & \hfill \mbox{ based on (\ref{eq:tmp2121})} \notag \\
	& \le \frac{1}{2}\sum_{t=1}^{T}  \| \balpha^{(t)} - \balpha \|^{2} \left( \frac{1}{\eta_t} - \frac{1}{\eta_{t-1}} \right) + \frac{G^2+1}{2} \sum_{t=1}^{T}\eta_t & \notag \\
	& \le \frac{1}{\eta_T} + \frac{G^2+1}{2} \sum_{t=1}^{T}\eta_t& \label{eq:regret bound with eta} \enspace .
	\end{align}
    \paul{
    We now show that $G \le D\sqrt{K}$.
    By assumption $\bw \in [0, 1]^D$ and $\bmu \in [0, 1]^{D\times K}$, therefore $\bw^\intercal \bmu \in [0, D]^K$.
    As the gradient of $f^{(t)}$ in $\balpha$ is given by $\bw^\intercal \bmu^{(t)}_\sigma$ with $\sigma$ the permutation that orders $\bmu^{(t)} \balpha$ in a decreasing order, we have $G \le D\sqrt{K}$.   
    }
    
\end{proof}

\section{$\tilde{O}(T^{-1/2})$ convergence along the trajectory}
\label{app:fuck}

\propfuck*

\begin{proof}
As the Gini index is $L$-Lipschitz (with $L\le K\sqrt{D}$), we simply need to bound the difference
$\sum_{t=1}^T \widehat{\bmu}^{(t)} \balpha^{(t)} - \sum_{t=1}^T \bmu \balpha^{(t)}$.
The main difficulty is that the $\widehat{\bmu}^{(t)}$ and the $\balpha^{(t)}$ vectors are not independent, so one cannot directly apply the standard concentration inequalities.
In order to obtain the claimed bound, we first divide the above expression into several parts, and deal with these parts separately.

Let $k \in [K]$. 
The division we need is obtained as follows:
\begin{align}
  \sum_{t=1}^T \widehat{\bmu}_k^{(t)} \alpha_k^{(t)}
  -
  \bmu_k \sum_{\tau=1}^T \alpha_k^{(\tau)}
\notag
&=
  \sum_{t=1}^T 
  \left(
    \widehat{\bmu}_k^{(t)}
    \left[
      \sum_{\tau=1}^t \alpha_k^{(\tau)}- \sum_{\tau=1}^{t-1} \alpha_k^{(\tau)} 
    \right]
  \right)
  -
  \bmu_k \sum_{\tau=1}^T \alpha_k^{(\tau)}
\notag
\\
&=
  \sum_{t=1}^{T-1} 
  \left(
    \left[
      \widehat{\bmu}_k^{(t)} - \widehat{\bmu}_k^{(t+1)}
    \right]
    \sum_{\tau=1}^t \alpha_k^{(\tau)} 
  \right)
  +
  \widehat{\bmu}_k^{(T) } \sum_{\tau=1}^T \alpha_k^{(\tau)} 
  -
  \bmu_k \sum_{\tau=1}^T \alpha_k^{(\tau)}
\label{eq: division}
\enspace.
\end{align}

For ease of notation, let $N_k(n) = \argmin\{ \tau \geq 1: T_k(\tau) \geq n \}$, and let $Z_k^n = X_k^{(N_k(n))}$.

The last two terms in (\ref{eq: division}) can be handled as follows:
\begin{align}
  \prob
  &\left[
    \left\|
      \widehat{\bmu}_k^{(T) } \sum_{\tau=1}^T \alpha_k^{(\tau)} 
    -
      \bmu_k \sum_{\tau=1}^T \alpha_k^{(\tau)}
    \right\|
    >
    \sqrt{5DT\ln(2/\delta)}
  \right]
\\
=&
  \prob
  \left[
     \left\|\widehat{\bmu}_k^{(T) } - \bmu_k \right\| \sum_{\tau=1}^T \alpha_k^{(\tau)} 
    >
    \sqrt{5DT\ln(2/\delta)}
  \right]
\notag
\\
\leq&
  \prob
  \left[
     \left\|\widehat{\bmu}_k^{(T) } - \bmu_k \right\| T_k(T	) 
     + \sqrt{2DT\ln(2/\delta)} 
    >
    \sqrt{5DT\ln(2/\delta)}
  \right]
  + \delta
\label{eq: Azuma in part 1}
\\
\leq&
  \prob
  \left[
     \left\|\left(\sum_{n=1}^{T_k(T)} Z_k^n\right) - T_k(T)\bmu_k \right\|
     + \sqrt{2DT\ln(2/\delta)} 
    >
    \sqrt{5DT\ln(2/\delta)}
  \right]
  + \delta
\notag
\\
=&
  \sum_{t=1}^T
  \prob
  \left[
     \left\|\left(\sum_{n=1}^t Z_k^n\right) - t\bmu_k \right\|
     + \sqrt{2DT\ln(2/\delta)}
    >
    \sqrt{5DT\ln(2/\delta)}
      \;;\; T_k(T)=t
  \right]
  + \delta
\notag
\\
\leq&
  \sum_{t=1}^T
  \prob
  \left[
     \sqrt{D(t/2)\ln(2/\delta)}
     + \sqrt{2DT\ln(2/\delta)}
    >
    \sqrt{5DT\ln(2/\delta)}
  \right]
  + (DT+1)\delta
\label{eq: Chernoff in part 1}
\\
=&
  (DT+1)\delta
\label{eq: division bound 1}
\enspace,
\end{align}
where in (\ref{eq: Azuma in part 1}) we used Claim~\ref{clm: Azuma} and the fact that each component of $\widehat{\bmu}_k^{(T)} - \bmu_k$ is bounded by 1 in absolute value,
and in (\ref{eq: Chernoff in part 1}) we used the Chernoff-Hoeffding's inequality with bound $\sqrt{(t/2)\ln(2/\delta)}$ on each of the $D$ components.

Handling the first term requires significantly more work, because the terms within the sum are neither independent nor sub- or supermartingales.
In order to overcome this, we apply a series of rewriting/decoupling iterations.
First of all, note that
\begin{align}
  \sum_{t=1}^{T-1} 
  \left(
    \left[ \widehat{\bmu}_k^{(t)} - \widehat{\bmu}_k^{(t+1)} \right] T_k(t)
  \right)
\notag
=&
  \sum_{n=1}^{T_k(T)-1} 
  \left(
    \left[ \widehat{\bmu}_k^{(N_k(n))} - \widehat{\bmu}_k^{(N_k(n+1))} \right]  n
  \right)
\\
=&
  \sum_{n=1}^{T_k(T)-1} 
  \left(
    \left[
      \left(\widehat{\bmu}_k^{(N_k(n))} - \bmu_k\right)  
    -
      \left(\widehat{\bmu}_k^{(N_k(n+1))} - \bmu_k\right) 
    \right]  n
  \right)
\notag
\\
=&
  \sum_{n=1}^{T_k(T)-1} 
  \left(
    \left[
      \sum_{\tau=1}^n \left(Z_k^\tau - \bmu_k\right) 
    -
      \frac{n}{n+1}\sum_{\tau=1}^{n+1} \left( Z_k^\tau - \bmu_k\right)
    \right]
  \right)
\notag
\\
=&
  \sum_{\tau=1}^{T_k(T)} 
  \left(Z_k^\tau - \bmu_k\right)
  \left(
    \left[ \sum_{n=\tau}^{T_k(T)-1} \frac{1}{n+1} \right]
    - \frac{\tau-1}{\tau}
  \right)
\notag
\enspace,
\end{align}
and thus
\begin{align}
   &\prob
  \left[
    \left\|
      \sum_{n=1}^{T_k(T)-1} 
      \left(
        \left[ \widehat{\bmu}_k^{(N_k(n))} - \widehat{\bmu}_k^{(N_k(n+1))} \right]  n
      \right)
    \right\|
  \geq
    \sqrt{DT(\ln^2 T)\ln(2/\delta)}
  \right]
\notag
\\
&\leq
  \prob
  \left[
      \sum_{\tau=1}^{T_k(T)} 
      \left\|Z_k^\tau - \bmu_k\right\|
      \left|
        \left[ \sum_{n=\tau}^{T_k(T)-1} \frac{1}{n+1} \right]
        - \frac{\tau-1}{\tau}
      \right|
  \geq
    \sqrt{DT(\ln^2 T)\ln(2/\delta)}
  \right]
\notag
\\
&\leq
  \sum_{t=1}^T
  \prob
  \left[
      T_k(T)=t \;;\;
      \sum_{\tau=1}^{T_k(T)} 
      \left\|Z_k^\tau - \bmu_k\right\|
      \left|
        \left[ \sum_{n=\tau}^{T_k(T)-1} \frac{1}{n+1} \right]
        - \frac{\tau-1}{\tau}
      \right|
  \geq
    \sqrt{DT(\ln^2 T)\ln(2/\delta)}
  \right]
\notag
\\
&\leq
  \sum_{t=1}^T
  \prob
  \left[
      \sum_{\tau=1}^{t} 
      \left\|Z_k^\tau - \bmu_k\right\|
      \left|
        \left[ \sum_{n=\tau}^{t-1} \frac{1}{n+1} \right]
        - \frac{\tau-1}{\tau}
      \right|
  \geq
    \sqrt{DT(\ln^2 T)\ln(2/\delta)}
  \right]
\notag
\notag
\\
&\leq
  \sum_{t=1}^T
  D\exp
  \left(
    \frac{-T(\ln^2 T)\ln(2/\delta)}
    {\sum_{\tau=1}^t 
      \left( 
        \left[ \sum_{n=\tau}^{t-1} \frac{1}{n+1} \right] - \frac{\tau-1}{\tau} 
      \right)^2
    }
  \right)
\label{eq: Azuma 3}
\\
&\leq
  \sum_{t=1}^T
  D\exp
  \left(
    \frac{-T(\ln^2 T)\ln(2/\delta)}{t \ln^2 t}
  \right)
\leq
  DT\delta
\enspace,
\label{eq: division middle bound}
\end{align}
where in (\ref{eq: Azuma 3}) we applied the Chernoff-Hoeffding's inequality with bound $\sqrt{T(\ln^2 T)\ln(2/\delta)}$ to each of the $D$ components.

Now, \paul{define $\chi(T)$ as
\begin{align} \label{eq:chi}
\chi(T) = \sqrt{6DT(\ln^2 T)\ln(2/\delta)} \enspace.
\end{align}
We then have:}
\begin{align}
  \prob
  &\left[
  \left\| 
    \sum_{t=1}^{T-1}
    \left(
      \left(\widehat{\bmu}_k^{(t)}-\widehat{\bmu}_k^{(t+1)}\right)\sum_{\tau=1}^t\alpha_k^{(\tau)}
    \right)
  \right\|
  >
  \chi(T)
  \right]
\notag
\\
=&
  \prob
  \left[
  \left\|
    \sum_{n=1}^{T_k(T)-1}
    \left(
      \left(\widehat{\bmu}_k^{(N_k(n))}-\widehat{\bmu}_k^{(N_k(n+1))}\right)
      \sum_{\tau=1}^{N_k(n)}\alpha_k^{(\tau)}
    \right)
  \right\|
  >
  \chi(T)
  \right]
\label{eq: averages are equal}
\\
\leq&
  \prob
  \left[
  \left\| 
    \sum_{n=1}^{T_k(T)-1}
    \left(
      \left(\widehat{\bmu}_k^{(N_k(n))}-\widehat{\bmu}_k^{(N_k(n+1))}\right)
      n
    \right)
  \right\|
  +
  \left\| 
    \sum_{n=1}^{T_k(T)-1}
    \left(
      \left(\widehat{\bmu}_k^{(N_k(n))}-\widehat{\bmu}_k^{(N_k(n+1))}\right)
      \left[n-\sum_{\tau=1}^{N_k(n)} \alpha_k^{(\tau)} \right]
    \right)
  \right\|
  >
  \chi(T)
  \right]
\notag
\\
\leq&
  \prob
  \left[
  \left\| 
    \sum_{n=1}^{T_k(T)-1}
    \left(
      \left(\widehat{\bmu}_k^{(N_k(n))}-\widehat{\bmu}_k^{(N_k(n+1))}\right)
      n
    \right)
  \right\|
  +
  \sum_{n=1}^{T_k(T)-1}
  \left(
    \left\|\widehat{\bmu}_k^{(N_k(n))}-\widehat{\bmu}_k^{(N_k(n+1))}\right\|
    \cdot
    \left|n-\sum_{\tau=1}^{N_k(n)} \alpha_k^{(\tau)} \right|
  \right)
  >
  \chi(T)
  \right]
\notag
\\
\leq&
  \prob
  \left[
  \left\|
    \sum_{n=1}^{T_k(T)-1}
    \left(
      \left(\widehat{\bmu}_k^{(N_k(n))}-\widehat{\bmu}_k^{(N_k(n+1))}\right)
      n
    \right)
  \right\|
  +
  \sqrt{D}
  \sum_{n=1}^{T_k(T)-1}
  \left(
    \frac{2}{n+1}
    \left|n-\sum_{\tau=1}^{N_k(n)} \alpha_k^{(\tau)} \right|
  \right)
  >
  \chi(T)
  \right]
\label{eq: averages are close to each other}
\\
\leq&
  \prob
  \Bigg[
  \sqrt{DT(\ln^2 T)\ln(2/\delta)}
  +
  \sqrt{D}
  \sum_{n=1}^{T_k(T)-1}
  \left(
    \frac{2}{n+1}
    \left|n-\sum_{\tau=1}^{N_k(n)} \alpha_k^{(\tau)} \right|
  \right)
  >
  \chi(T)
  \;,\; \mathcal{E}_1\cap\mathcal{E}_2\Bigg] + \prob[\mathcal{E}_1^c] + \prob[\mathcal{E}_2^c]
\label{eq: mathcal E}
\\
\leq&
  \prob
  \left[
  \sqrt{DT(\ln^2 T)\ln(2/\delta)}
  +
  \sqrt{D}
    \sum_{n=1}^{T_k(T)-1}
    \left(
      \frac{2}{n+1}
      \sqrt{2N_k(n)\ln(2/\delta)}
    \right)
  >
  \chi(T)
  \right]
  +
  (DT+1)\delta
\label{eq: Azuma 4}
\\
\leq&
  \prob
  \left[
  \sqrt{DT(\ln^2 T)\ln(2/\delta)}
  +
  \sqrt{2DT(\ln^2 T)\ln(2/\delta)}
  >
  \chi(T)
  \right]
  +
  (DT+1)\delta
\notag
\\
=&
  (DT+1)\delta
\label{eq: division bound 2}
\end{align}
where (\ref{eq: averages are equal}) follows from the fact that  $\widehat{\bmu}_k^{(t)}=\widehat{\bmu}_k^{(t+1)}$ unless $t+1 = N_k(n)$ for some $n$,
(\ref{eq: averages are close to each other}) follows from the fact that
\(
  (1/\sqrt{D})\left|\widehat{\bmu}_k^{(N_k(n))}-\widehat{\bmu}_k^{(N_k(n+1))}\right|
\leq
  \frac{1}{n+1}\|Z_k^{n+1}\|+\sum_{i=1}^{n}(\tfrac{1}{n}-\frac{1}{n+1})\|Z_k^{i}\|
=
  \frac{2}{n+1}
\),
(\ref{eq: mathcal E}) follows from (\ref{eq: division middle bound}) and
$\mathcal E_1$
denotes the event that 
\(
   \left|T_k(t) - \sum_{t=1}^{t} \alpha_k^{(t)}\right| \leq \sqrt{2t\ln(2/\delta)} 
\)
for all $t=1, \dots, T$ and $\mathcal E_2^c$ denotes the event that
\(
  \left| 
    \sum_{n=1}^{T_k(T)-1}
    \left(
      \left(\widehat{\bmu}_k^{(N_k(n))}-\widehat{\bmu}_k^{(N_k(n+1))}\right)
      n
    \right)
  \right|
\leq
  \sqrt{DT(\ln^2 T)\ln(2/\delta)}
\)
($\mathcal E_1^c$ and $\mathcal E_2^c$ denote the complementary events),
(\ref{eq: Azuma 4}) follows from Claim~\ref{clm: Azuma} and the fact that $T_k(N_k(n)) = n$. 

The claimed bound now follows from (\ref{eq: division}), (\ref{eq: division bound 1}) and (\ref{eq: division bound 2}) because the Gini index is $L$-Lipschitz.
\end{proof}

\corgofalp*


\section{Proof of Proposition \ref{prop:main}}
\label{app:propmain}

In order to ease technicalities, we define events
\begin{align*}
  \calE_\mu
&=
  \left\{
    (\forall 1 \leq t \leq T)
    \sqrt{2T_{k}(t)}\left\|\hat{\bmu}_k^{(t)} - \bmu_k\right\| < \sqrt{D\ln(2\delta)}
  \right\},
\\
  \calE_\alpha
&=
  \left\{
    (\forall 1 \leq t \leq T)
    \left|T_k(t) - \sum_{\tau=1}^{t} \alpha_k^{(\tau)}\right| < \sqrt{2t\ln(2/\delta)} 
  \right\}
\end{align*}
and
\[
  \calE_f
=
  \left\{
    f\big( \bar{\balpha}^{(t)} \big) 
  \leq
  \frac{1}{T}\sum_{t=1}^T f^{(t)}\big( \balpha^{(t)}\big)
  +
  L\sqrt{\tfrac{6D(1+\ln^2 T)\ln(2/\delta)}{T}}
  \right\}
\enspace.
\]

The following technical lemma will also be useful later.

\begin{lemma}
\label{lem: on event T-mu-f-chi}
Let $\balpha^* \in \argmin_{\balpha \in \Delta_K} f(\balpha)$, 
and $\chi = \chi_{\balpha^*} : \N \mapsto \R_+$ be some function and define event
\begin{align*}
  \calE(\chi,\balpha^*)
&=
  \{ 
    (\forall 1 \leq t \leq T)
    \left(\forall k \in [K] \mbox{\ with\ }\alpha_k^*>0\right)
    T_k(t) > \chi(t) 
  \}
\enspace,
\end{align*}
Then, conditioned on the event
\(
    \calE_\mu\cap\calE_f\cap\calE(\chi,\balpha^*)
\), 
\[
      f\left( \frac{1}{T} \sum_{t=1}^T \balpha^{(t)} \right) - f(\balpha^*)
    \leq
      \frac{\zeta^{\chi}(T)}{T}      
\enspace,
\]
where
\begin{align}
  \zeta^{\chi}(\tau)
=
  L\sqrt{6D\tau(1+\ln^2 \tau)\ln(2/\delta)} 
  +
  \frac{1}{\eta_\tau} + \frac{G^2+1}{2} \sum_{t=1}^{\tau}\eta_t
  +
  LK\sum_{t=1}^\tau\sqrt{\frac{D\ln(2/\delta)}{2\chi(t)}} 
\enspace.
\notag
\end{align}
\end{lemma}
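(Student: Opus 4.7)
The plan is to decompose the gap $f(\bar\balpha^{(T)})-f(\balpha^*)$ into three pieces, one controlled by each of the three events in the conditioning, and then rescale by $T$.

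First, I would use the conditioning on $\calE_f$ to pass from the true objective at the average iterate to the average of the estimated objectives along the trajectory:
\[
  f\!\left(\tfrac{1}{T}\sum_{t=1}^T \balpha^{(t)}\right)
  \;\le\;
  \frac{1}{T}\sum_{t=1}^T f^{(t)}(\balpha^{(t)}) + L\sqrt{\tfrac{6D(1+\ln^2 T)\ln(2/\delta)}{T}}.
\]
Multiplied by $T$, the error term is exactly the leading contribution $L\sqrt{6DT(1+\ln^2 T)\ln(2/\delta)}$ in $\zeta^{\chi}(T)$. This step does not use $\calE_\mu$ or $\calE(\chi,\balpha^*)$ at all, only the corollary.

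Second, I would plug $\balpha=\balpha^*$ into Lemma \ref{lemma:ogde}, which gives
\(
  \sum_{t=1}^T f^{(t)}(\balpha^{(t)}) - \sum_{t=1}^T f^{(t)}(\balpha^*)
  \le \frac{1}{\eta_T} + \frac{G^2+1}{2}\sum_{t=1}^T \eta_t.
\)
This accounts for the two middle terms of $\zeta^{\chi}(T)$ and, importantly, requires only convexity of the $f^{(t)}$'s, which is guaranteed by convexity of $G_{\bw}$.

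Third, I need to control the ``bias'' term $\frac{1}{T}\sum_{t=1}^T f^{(t)}(\balpha^*) - f(\balpha^*)$, which is where $\calE_\mu$ and $\calE(\chi,\balpha^*)$ come in. By the $L$-Lipschitz property of $G_{\bw}$ and the triangle inequality,
\[
  \bigl|G_{\bw}(\widehat{\bmu}^{(t)}\balpha^*) - G_{\bw}(\bmu\balpha^*)\bigr|
  \;\le\;
  L\sum_{k:\,\alpha_k^*>0} \alpha_k^*\,\|\widehat{\bmu}_k^{(t)} - \bmu_k\|
  \;\le\;
  L\sum_{k:\,\alpha_k^*>0} \|\widehat{\bmu}_k^{(t)} - \bmu_k\|,
\]
using $\alpha_k^*\le 1$. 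The key observation is that this sum only involves arms in the support of $\balpha^*$, which is precisely the set for which $\calE(\chi,\balpha^*)$ guarantees $T_k(t)>\chi(t)$. Combined with $\calE_\mu$, each term is bounded by $\sqrt{D\ln(2/\delta)/(2\chi(t))}$, and since the support contains at most $K$ arms we get a per-round bound of $LK\sqrt{D\ln(2/\delta)/(2\chi(t))}$. Summing over $t$ yields the remaining term of $\zeta^{\chi}(T)$.

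Adding the three pieces and dividing by $T$ gives the claimed inequality. The principal obstacle is the third step: one must notice that arms outside the support of $\balpha^*$ contribute zero to $(\widehat{\bmu}^{(t)}-\bmu)\balpha^*$, so the badly-estimated arms in $[K]\setminus\{k:\alpha_k^*>0\}$ do not enter this bias term — their effect is absorbed entirely by the OGD regret in step two. This is precisely why the event $\calE(\chi,\balpha^*)$ is formulated only on the support of $\balpha^*$, making the whole argument go through without requiring linear pulls of every arm.
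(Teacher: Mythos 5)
Your proposal is correct and follows essentially the same three-part decomposition as the paper's proof: the $\calE_f$ event controls $Tf(\bar\balpha^{(T)})-\sum_t f^{(t)}(\balpha^{(t)})$, the OGD regret bound of Lemma~\ref{lemma:ogde} at $\balpha=\balpha^*$ controls the middle terms, and $\calE_\mu\cap\calE(\chi,\balpha^*)$ controls the bias $\sum_t f^{(t)}(\balpha^*)-Tf(\balpha^*)$ via the Lipschitz property restricted to the support of $\balpha^*$. The only cosmetic difference is that you bound $\|(\widehat{\bmu}^{(t)}-\bmu)\balpha^*\|$ by $\sum_{k:\alpha_k^*>0}\|\widehat{\bmu}_k^{(t)}-\bmu_k\|$ while the paper uses $K\max_{k:\alpha_k^*>0}\|\widehat{\bmu}_k^{(t)}-\bmu_k\|$; both yield the same $LK$ factor.
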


\begin{proof}
Throughout the proof condition on the event $\calE_\mu\cap\calE_f\cap\calE(\chi,\balpha^*)$.

First of all, as the generalized Gini index is $L$-Lipschitz, it holds for any $\alpha \in \Delta_K$ that
\[
  \sum_{t=1}^T \left( f^{(t)}(\balpha) - f(\balpha) \right)
\leq
  L \sum_{t=1}^T\left\| \left(\widehat{\bmu}^{(t)} - \bmu\right)\balpha \right\|
\leq
  L K\max_{k \in \mathcal K: \alpha_k > 0} 
    \sum_{t=1}^T\left\|\widehat{\bmu}_k^{(t)} - \bmu_k\right\|
\enspace.
\]
Thus, due to the conditioning on the event $\calE_\mu\cap\calE(\chi,\balpha^*)$,
\begin{align}
  \sum_{t=1}^T \left( f^{(t)}(\balpha^*) - f(\balpha^*) \right)
\leq
  LK\sum_{t=1}^\paul{T}\sqrt{\frac{D\ln(2/\delta)}{2\chi(t)}} 
\label{eq: accurate mu estimates}
\enspace .
\end{align}

It then follows, however, that
\begin{align}
  Tf\left( \frac{1}{T} \sum_{t=1}^T \balpha^{(t)} \right) - Tf(\balpha^*)
&\paul{=}
  Tf\left( \frac{1}{T} \sum_{t=1}^T \balpha^{(t)} \right)
  -
  \left[ \sum_{t=1}^Tf^{(t)}\left(\balpha^{(t)}\right)\right]
  +
  \left[ \sum_{t=1}^Tf^{(t)}\left(\balpha^{(t)}\right)\right]
  -
  Tf(\balpha^*)
\notag
\\
&<
  L\sqrt{6DT(1+\ln^2 T)\ln(2/\delta)} 
  +
  \frac{1}{\eta_T} + \frac{G^2+1}{2} \sum_{t=1}^{T}\eta_t
  +
  \sum_{t=1}^T\left[f^{(t)}(\balpha^*) - f(\balpha^*)\right]
\label{eq: apply even on f}
\\
&<
  \zeta^{\chi}(T)
\enspace,
\label{eq: apply zeta def}
\end{align}
where \eqref{eq: apply even on f} is due to \eqref{eq:regret bound with eta} and the conditioning on $\calE_f$,
and \eqref{eq: apply zeta def} is due to \eqref{eq: accurate mu estimates} and the definition of $\zeta^\chi$.
\end{proof}

Now we are ready to prove the proposition.
For convenience, we recall the statement.

\mainprop*

\begin{proof}
%
Let $\Delta_K^* = \argmin_{\balpha \in \Delta_K}f(\balpha)$ denote the set of optimal solutions of $f$ over $\Delta_K$.
By construction, $\alpha_k^{(t)} \geq \eta_t$ for every $t \geq 1$ and every $k \in [K]$, therefore, setting $\chi_0(\tau) = \max\left\{1,(\sum_{t=1}^\tau \eta_{t}) - \sqrt{2\tau\ln(2/\delta)}\right\}$, it holds for every $\balpha^* \in \Delta_K^*$ that
\begin{align}
  \calE_\alpha \subseteq \calE(\chi_0,\balpha^*)
\label{eq: E-t subseteq E-chi0}
\enspace,
\end{align}
where event $\calE(\chi_0,\balpha^*)$ is defined as in Lemma~\ref{lem: on event T-mu-f-chi}.
Noting that 
\begin{align}
\label{eq: LB for sum of etas}
  \sum_{t=1}^\tau \eta_t
=
  \tfrac{\sqrt{2\ln(2/\delta)}}{1-1/\sqrt{K}} \sum_{t=1}^\tau \tfrac{1}{\sqrt{t}}
\geq
  \tfrac{\sqrt{2\ln(2/\delta)}}{1-1/\sqrt{K}}\int_1^{\tau+1}\tfrac{1}{\sqrt{t}}dt
=
  \tfrac{\sqrt{2\ln(2/\delta)}}{1-1/\sqrt{K}}\left[2\sqrt{\tau+1}-2\right]
\enspace,
\end{align}
it follows that for every $\tau \geq K-1$
\begin{align}
  \chi_0(\tau)
\geq
  \sqrt{2\ln\tfrac{2}{\delta}}
  \left[\tfrac{2\sqrt{\tau+1}-2}{1-1/\sqrt{K}}-\sqrt{\tau}\right]
\geq
  \sqrt{2\ln\tfrac{2}{\delta}}
  \left[\tfrac{(1+1/\sqrt{K})\sqrt{\tau+1}-2}{1-1/\sqrt{K}}\right]
\geq
  \sqrt{2\tau\ln\tfrac{2}{\delta}}
\enspace
,
\label{eq: chi-0 lower bound}
\end{align}
which further implies
\begin{align}
&\zeta^{\chi_0}(\tau)
\notag
\\
&\leq
  L\sqrt{6D\tau(1+\ln^2 \tau)\ln\tfrac{2}{\delta}} 
+
  \tfrac{\left(1-\tfrac{1}{\sqrt{K}}\right)\sqrt{\tau}}{\sqrt{2\ln(2/\delta)}} 
+
  \frac{G^2+1}{2}
  \tfrac{\sqrt{2\ln(2/\delta)}}{1-1/\sqrt{K}}
  \left[2\sqrt{\tau}-1\right]
+
  (K-1)
  +
  LK \sqrt{D}\sqrt[4]{\frac{\ln(2/\delta)}{8}} 
  \sum_{t=K}^\tau \frac{1}{\sqrt[4]{t}}
\label{eq: zeta-chi-0 upper bound 1}
\\
&\leq
L\sqrt{600D\tau^{3/2}\ln\tfrac{2}{\delta}}
+
\sqrt{\tau}\left[ \frac{1}{2\ln\tfrac{2}{\delta}} + (KD^2+1)\sqrt{2\ln\tfrac{2}{\delta}} \right]
+K+
LK \sqrt{D}\sqrt[4]{\tfrac{\ln(2/\delta)}{8}}
\left(\tfrac{4}{3}\tau^{3/4} + K^{-1/4} - \tfrac{4}{3}K^{3/4}\right)
\label{eq: zeta-chi-0 upper bound 2}
\\
&\leq
10LKD^2\sqrt{6\ln(2/\delta)}\tau^{3/4}
\label{eq: bound on zeta-chi}
\enspace,
\end{align}
where \eqref{eq: zeta-chi-0 upper bound 1} follows from \eqref{eq: chi-0 lower bound} and
\begin{align}
\label{eq: UB on sum of etas}
  \sum_{t=1}^\tau \eta_t
=
  \tfrac{\sqrt{2\ln(2/\delta)}}{1-1/\sqrt{K}} \sum_{t=1}^\tau \tfrac{1}{\sqrt{t}}
\leq
  \tfrac{\sqrt{2\ln(2/\delta)}}{1-1/\sqrt{K}}\left[1+\int_1^\tau\tfrac{1}{\sqrt{t}}dt\right]
=
  \tfrac{\sqrt{2\ln(2/\delta)}}{1-1/\sqrt{K}}\left[2\sqrt{\tau}-1\right]
\enspace,
\end{align}
and \eqref{eq: zeta-chi-0 upper bound 2} follows from Lemma~\ref{lemma:ogde}
\begin{align}
  \sum_{t=K}^\tau \frac{1}{\sqrt[4]{t}}
\leq
  K^{-1/4}+\int_K^\tau \frac{1}{\sqrt[4]{t}}dt
\leq
 K^{-1/4}+\tfrac{4}{3}[\tau^{3/4} - K^{3/4}]
\end{align}

Choose $\balpha^* \paul{\in} \argmin_{\balpha \in \Delta_K^*} \|\balpha - \sum_{t=1}^T \balpha^{(t)}\|$.
By Lemma~\ref{lem: g-star}, it holds that
\begin{align}
\label{eq: applying lemma on the diff of alpha-t and alpha-star}
    g^*\max_{k \in \mathcal K}
    \left|
      \left[\frac{1}{T} \sum_{t=1}^T \balpha_k^{(t)} \right] - \balpha_k^*
    \right|
\leq
    f\left( \frac{1}{T} \sum_{t=1}^T \balpha^{(t)} \right) - f(\balpha^*)
\enspace.
\end{align}
Therefore, on event $\calE_\mu \cap \calE_f \cap \calE_\alpha$,
\begin{align}
    \max_{k \in \mathcal K}
    \left|
      \left[\frac{1}{T} \sum_{t=1}^T \balpha_k^{(t)} \right] - \balpha_k^*
    \right|
&\leq
    \frac{\zeta^{\chi_0}(T)}{g^*T}
\label{eq: bound on alphak - alphak-star}
\enspace,
\end{align}
where \eqref{eq: bound on alphak - alphak-star} follows from \eqref{eq: applying lemma on the diff of alpha-t and alpha-star} by Lemma~\ref{lem: on event T-mu-f-chi} due to the conditioning on event $\calE_\mu \cap \calE_f \cap \calE_\alpha$ and recalling \eqref{eq: E-t subseteq E-chi0}.

Now, as $\balpha^* \in \Delta_K^*$, it can be represented as a convex combination of the extreme points of $\Delta_K^*$.
Consider such a representation, and choose $\balpha^+ \in \ext(\Delta_K^*)$ to be the one with maximal coefficient; note that this coefficient must be at least $1/|\ext(\Delta_K^*)|$; that is,
\begin{align}
  \alpha_k^* \geq \tfrac{1}{\ext(|\Delta_K^*|)}\alpha_k^+
\enspace.
\label{eq: alpha-star-k vs alpha-plus-k}
\end{align}
Then, conditioned on event $\calE_\alpha\cap\calE_\mu\cap\calE_f$, for every $k\in [K]$,
\begin{align}
  T_k(\tau)
&\geq
  \sum_{t=1}^\tau\alpha_k^{(t)} - \sqrt{2\tau\ln(2/\delta)}
\label{eq: applying event T-t-delta}
\\
&\geq
  \tau\alpha_k^* - \tfrac{\zeta^{\chi_0}(\tau)}{g^*} - \sqrt{2\tau\ln(2/\delta)}
\label{eq: applying bound on alphak - alphak-star}
\\
&\geq
  \tau\tfrac{\alpha_k^+}{|\ext(\Delta_K^*)|} - \tfrac{\zeta^{\chi_0}(\tau)}{g^*} - \sqrt{2\tau\ln(2/\delta)}
\label{eq_linear lower bound on Tk 1}
\end{align}
where \eqref{eq: applying event T-t-delta} follows from conditioning on event $\calE_\alpha$,
\eqref{eq: applying bound on alphak - alphak-star} follows because of
\eqref{eq: bound on alphak - alphak-star} due to the conditioning on $\calE_\alpha\cap\calE_\mu\cap\calE_f$
,
and \eqref{eq_linear lower bound on Tk 1} follows by \eqref{eq: alpha-star-k vs alpha-plus-k}.
%

Let, now, $a_0 = \min_{\balpha \in \ext(\Delta_K^*)} \min_{k: \alpha_k>0} \alpha_k$, where $\ext(\Delta_K^*)$ denotes the extreme points of $\Delta_K^*$.
Note that $\Delta_K^*$ is a convex polytope due to Lemma~\ref{lem: g-star}, and thus has finite number of extreme points, justifying the $\min$ and implying that $a_0 > 0$.
As $\balpha^+ \in \ext(\Delta_K^*)$, it follows that $\alpha_k^+ \geq a_0$ for every $k$ with $\alpha_k^+ >0$.
Then, by \eqref{eq_linear lower bound on Tk 1}, conditioned on event $\calE_\alpha\cap\calE_\mu\cap\calE_f$, for every $k\in [K]$ with $\alpha_k^+ >0$ and $\tau\geq1$,
\begin{align}
  T_k(\tau)
\geq
  \tau\tfrac{a_0}{|\ext(\Delta_K^*)|} - \tfrac{\zeta^{\chi_0}(\tau)}{g^*} - \sqrt{2\tau\ln(2/\delta)}
\label{eq_linear lower bound on Tk}
\enspace,
\end{align}
implying $T_k(\tau)\geq a_0\tau/(2|\ext(\Delta_K^*)|)$ with
\[
  \tau_1
=
  \left[(2|\ext(\Delta_K^*)|)\left[ 2+\tfrac{10\sqrt{3}LKD^2}{g^*} \right]\sqrt{2\ln\tfrac{2}{\delta}}\right]^4
\geq
  1
  +
  \max\left\{
    \tau \in \N: \tfrac{\tau a_0}{2|\ext(\Delta_K^*)|} 
    <
    \zeta^{\chi_0}(\tau)/g^*
    +\sqrt{2\tau\ln\tfrac@2}{\delta} \right\}
\enspace
,
\]
where the inequality is due to 
\eqref{eq: bound on zeta-chi}.
Consequently,
setting
\[
  \chi_1(\tau) = \IND(\tau \le \tau_1) + \IND(\tau >\tau_1)(\tau a_0/(2|\ext(\Delta_K^*)|))
,
\]
it follows that
\(
  \calE_\alpha\cap\calE_\mu\cap\calE_f \subseteq \calE(\chi_1,\balpha^+)
\).
This completes the proof due to Lemma~\ref{lem: on event T-mu-f-chi} and the upper bound on $G$ from Lemma~\ref{lemma:ogde}, noting that
\(
  \prob[\calE_\alpha^c] \leq \delta
\)
due to Claim~\ref{clm: Azuma} (here, for event $\calE$, we denote its complementer event by $\calE^c$), 
\(
  \prob[\calE_f^c] \leq 2(DT+1)K\delta
\)
due to Corollary~\ref{cor:gofalp},
and
\begin{align}
  \prob[\calE_\mu^c]
&\leq
  \sum_{t=1}^T
  \sum_{\tau=1}^T
  \left[
    \sqrt{2T_{k}(t)}\left\|\hat{\bmu}_k^{(t)} - \bmu_k\right\| \geq \sqrt{D\ln(2\delta)} \;\&\; T_k(t)=\tau
  \right]
\label{eq: bounding P[e-mu] - union bound}
\\
&\leq
  \sum_{t=1}^T
  \sum_{\tau=1}^T
  \left[
    \left\|\sum_{n=1}^\tau Z_k^n - \bmu_k\right\| \geq \sqrt{\frac{D\ln(2\delta)}{2\tau}}
  \right]
\label{eq: bounding P[e-mu] - preparation for Chernoff}
\\
&\leq
  \balazs{2T^2D\delta}
\label{eq: bounding P[e-mu] - Chernoff}
\end{align}
where \eqref{eq: bounding P[e-mu] - union bound} follows due to the union bound, 
$Z_k^n = X_k^{(N_k(n))}$ with $N_k(n) = \argmin\{ \tau \geq 1: T_k(\tau) \geq n \}$ in \eqref{eq: bounding P[e-mu] - preparation for Chernoff}, and
finally \eqref{eq: bounding P[e-mu] - Chernoff} follows due to the Chernoff-Hoeffding bound.
\end{proof}

\section{The proof of $g^*>0$}
\label{app:gstar}

\begin{lemma}
\label{lem: g-star}
The set $\Delta_K^* = \argmin_{\balpha \in \Delta_K}f(\balpha)$ of optimal solutions is a convex polytope.
Additionally, it also holds that
\(
  g^*
=
  \inf_{\balpha \in \Delta_K\setminus\Delta_K^*}
    \;\max_{\balpha^* \in \Delta_K^*}
    \tfrac{f(\balpha)-f(\balpha^*)}{\|\balpha-\balpha^*\|}
\)
is positive.
\end{lemma}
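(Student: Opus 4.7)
The plan is to leverage the piecewise linear convex structure of $f$. By the $\max$-formulation of the GGI recalled in Section~\ref{sec:ggi}, we have $f(\balpha) = G_\bw(\bmu\balpha) = \max_{\pi \in \mathbb{S}_D} \bw_\pi^\intercal \bmu \balpha$, so $f$ is the maximum of at most $D!$ linear functions $L_\pi(\balpha) := \bw_\pi^\intercal \bmu \balpha$ and is therefore piecewise linear convex. Letting $f^* = \min_{\balpha \in \Delta_K} f(\balpha)$, the set of optimizers can be written as
\[
\Delta_K^* = \{\balpha \in \Delta_K : f(\balpha) \le f^*\} = \Delta_K \cap \bigcap_{\pi \in \mathbb{S}_D} \{\balpha : L_\pi(\balpha) \le f^*\},
\]
which is the intersection of the polytope $\Delta_K$ with finitely many halfspaces, hence itself a convex polytope. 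This settles the first claim.

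For the lower bound on $g^*$, I would first simplify the $\max$ in its definition. For each fixed $\balpha \in \Delta_K \setminus \Delta_K^*$, the numerator $f(\balpha) - f(\balpha^*) = f(\balpha) - f^*$ is constant in $\balpha^*$, so the maximum is achieved by minimizing $\|\balpha - \balpha^*\|$ over the compact set $\Delta_K^*$, i.e., at the projection $\Pi_{\Delta_K^*}(\balpha)$. Thus
\[
g^* = \inf_{\balpha \in \Delta_K \setminus \Delta_K^*} \frac{f(\balpha) - f^*}{d(\balpha, \Delta_K^*)},
\]
so it suffices to exhibit a uniform sharpness (error) bound of the form $d(\balpha, \Delta_K^*) \le C\,(f(\balpha) - f^*)$ on $\Delta_K$ for some finite $C > 0$.

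Such a bound follows from Hoffman's lemma applied to the polyhedral description of $\Delta_K^*$. Writing the full system of linear inequalities defining $\Delta_K^*$ (those of $\Delta_K$ together with $L_\pi(\balpha) \le f^*$ for each $\pi \in \mathbb{S}_D$) as $A\balpha \le b$, Hoffman's lemma yields a constant $H > 0$ with $d(\balpha, \Delta_K^*) \le H \|(A\balpha - b)^+\|$ for every $\balpha$. For $\balpha \in \Delta_K$ the rows describing $\Delta_K$ are not violated, and for each permutation row $(L_\pi(\balpha) - f^*)^+ \le f(\balpha) - f^*$ since $f = \max_\pi L_\pi$; combining gives
\[
d(\balpha, \Delta_K^*) \le H\sqrt{D!}\,(f(\balpha) - f^*),
\]
from which $g^* \ge 1/(H\sqrt{D!}) > 0$.

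The main technical obstacle is precisely this error-bound step. Hoffman's lemma is the cleanest tool, but if one prefers a self-contained argument one can instead reason by compactness and the finiteness of linear pieces: assume for contradiction $g^* = 0$ and extract a sequence $\balpha_n \in \Delta_K \setminus \Delta_K^*$ with $(f(\balpha_n)-f^*)/d(\balpha_n,\Delta_K^*) \to 0$; pass to a subsequence so that $\balpha_n \to \balpha_\infty$, $\Pi_{\Delta_K^*}(\balpha_n) \to \balpha^*_\infty$, and the normalized direction $(\balpha_n - \Pi_{\Delta_K^*}(\balpha_n))/\|\cdot\| \to \bv$. Because $f$ has only finitely many linear pieces, for all large $n$ the segment from $\Pi_{\Delta_K^*}(\balpha_n)$ to $\balpha_n$ lies in a single piece on which $f$ equals some fixed $L_\pi$, and one verifies that $|\bw_\pi^\intercal \bmu \bv| > 0$ (otherwise the projection could be moved inside $\Delta_K^*$ toward $\balpha_n$, contradicting its definition), yielding the required positive lower bound.
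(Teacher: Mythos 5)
Your proof is correct but follows a genuinely different route from the paper's. For the polytope claim, you describe $\Delta_K^*$ as the sublevel set $\{\balpha \in \Delta_K : \max_\pi \bw_\pi^\intercal\bmu\balpha \le f^*\}$, i.e., the intersection of $\Delta_K$ with finitely many halfspaces; the paper instead partitions $\Delta_K$ into polytopes $A_\pi$ on which $f$ is linear and argues that the argmin is a face of one of them. Both are valid, and yours is the more direct description. For the positivity of $g^*$, you correctly reduce to a global error bound $d(\balpha,\Delta_K^*) \le C\,(f(\balpha)-f^*)$ and obtain it from Hoffman's lemma applied to the polyhedral system defining $\Delta_K^*$ --- a clean and standard argument, but one that outsources the real content to an external (if classical) result. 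The paper instead gives a self-contained elementary argument: it defines $g^+$ as the minimum of the difference quotient over the finitely many extreme points of the $A_\pi$ lying outside $\Delta_K^*$ (hence $g^+>0$), and extends the bound to an arbitrary $\balpha \in A_\pi\setminus\Delta_K^*$ by writing $\balpha$ as a convex combination of extreme points, cancelling the contribution of the optimal ones, and using linearity of $f$ on $A_\pi$ together with the triangle inequality. What each buys: your route is shorter and immediately quantitative in the Hoffman constant; the paper's avoids any black box and produces the explicit certificate $g^*\ge g^+$. One caution: your fallback compactness sketch is the weakest part --- the assertion that the segment from $\Pi_{\Delta_K^*}(\balpha_n)$ to $\balpha_n$ eventually lies in a single linear piece is not justified as stated --- but since the Hoffman argument is complete, this does not affect correctness.
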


\begin{proof}

Defining, for a permutation $\pi$ over $[D]=\{1,2,\dots,D\}$, the set
\(
  A_\pi
=
  \{ \balpha\in \Delta_K: \forall i,j \in [D], (w_i - w_j)((\bmu\balpha)_{\pi(i)} - (\bmu\balpha)_{\pi(j)}) \geq 0 \}
\),
it follows that $f(\balpha) = G_w(\bmu\balpha)$ is linear over each $A_\pi$; that is,
\(
  f(\balpha)
=
  \mathbf{f}_{\pi}^\intercal \balpha,
\forall \balpha \in A_\pi
\)
for some $\mathbf{f}_{\pi} \in \R^K$ with component $k$ defined as $\sum_{d = 1}^D (w_d\mu_{k,\pi(d)})$.
(Observe that sets associated with different permutations can coincide when $\bw$ has identical components, and that non-coinciding $A_\pi$ and $A_{\pi'}$ overlap each other on some lower dimensional faces.)
It is clear that
\(
  \Delta_K
=
  \cup_{\pi}A_\pi
\enspace.
\)
Additionally, it is also easy to see that $A_\pi$ is a polytope, noting that
\(
  A_\pi
=
  \Delta_K \cap \bigcap_{1 \leq i < j \leq D} \{ \balpha\in \R^K: \balpha^\intercal \mathbf{b}_{i,j,\pi} \geq 0 \}
\),
where vector $\mathbf{b}_{i,j,\pi} \in \R^K$ has component $k$ defined as $[(w_i-w_j)(\bmu_{k,\pi(i)}- \bmu_{k,\pi(j)})]$ for $1 \leq k \leq K$.
Finally, from all the above it also follows that $f$ is a piecewise-linear convex function with linear regions $\{A_\pi\}_\pi$, thus the minimal set $\Delta_K^* = \argmin_{\balpha \in \Delta_K} f(\balpha)$
is a face of one of the $A_\pi$ polytopes.~%
\footnote{See \cite{Boyd2004} for more about piecewise-linear convex functions.}
The first claim of the lemma follows.

Let $\ext(A)$ denote the extreme points of a convex polytope $A$, and
define
\[
  g^+
=
  \min_\pi 
  \min_{\balpha \in \ext(A_\pi)\setminus \Delta_K^*}
  \min_{\balpha^* \in \Delta_K^*} \frac{f(\balpha)-f(\balpha^*)}{\|\balpha-\balpha^*\|}
\enspace.
\]
Note that the definition makes sense and that
\[
0 < g^+ < \infty
\enspace,
\]
because $\Delta_K^*$ is bounded and convex and $\ext(A_\pi)\setminus \Delta_K^*$ is a finite set with no elements from $\Delta_K^*$.

Now, choose some permutation $\pi$ and some $\balpha \in A_\pi \setminus \Delta_K^*$.
Then $\balpha$ can be written as a convex combination $\balpha = \sum_{\balpha' \in \ext(A_\pi)}  \balpha'\omega(\balpha')$ for some weight vector $\omega: \ext(A_\pi) \to [0,1]$ with $\sum_{\balpha' \in \ext(A_\pi)}  \omega(\balpha') = 1$.
(This form is possibly non-unique.)
Let $N^* = \{\balpha \in \ext(A_\pi) \cap \Delta_K^*: \omega(\balpha)>0\}$, and let 
$\balpha^* = \tfrac{1}{\sum_{\balpha' \in N^*}\omega(\balpha') }\sum_{\balpha' \in N^*} \balpha'\omega(\balpha')$ when $N^* \neq \emptyset$, otherwise chose $\balpha^* \in \Delta_K^*$ in an arbitrary fashion.
In either case,
\begin{align}
\label{eq: structure of a-star}
  \sum_{\balpha' \in N^*}  \omega(\balpha')(\balpha^* - \balpha') = 0
\enspace.
\end{align}
Then, since $f$ is linear over $A_\pi$,
\begin{align}
  \frac{f(\balpha)-f(\balpha^*)}{\|\balpha^* - \balpha\|}
=&
  \frac
  {\sum_{\balpha' \not\in N^*} \omega(\balpha') (f(\balpha') - f(\balpha^*))}
  {
    \left\|
      \left(
        \sum_{\balpha' \not\in N^*} \omega(\balpha')(\balpha^* - \balpha')
      \right) 
    - 
        \left(
          \sum_{\balpha' \in N^*} \omega(\balpha')(\balpha^* - \balpha')
        \right)
    \right\|
  }
\notag\\
\geq&
  \frac
  {\sum_{\balpha' \not\in N^*} \omega(\balpha') (f(\balpha') - f(\balpha^*))}
  {
    \sum_{\balpha' \not\in N^*} \omega(\balpha')
    \left\|\balpha^*-\balpha'\right\|
  }
\label{eq: detaching optimals}
\\
\geq&
  \frac
  {\sum_{\balpha' \not\in N^*} \omega(\balpha') (f(\balpha') - f(\balpha^*))}
  {\sum_{\balpha' \not\in N^*} \omega(\balpha') (f(\balpha') - f(\balpha^*))/g^+}
= g^+
\notag
\enspace.
\end{align}
where \eqref{eq: detaching optimals} follows because of \eqref{eq: structure of a-star} and the triangle inequality.
As $\alpha^* \in \Delta_K^*$ we obtain that $g^* \geq g^+$, completing the proof of the second claim of the lemma.
\end{proof}

\section{Proof of  Theorem \ref{thm:main}}
\label{app:mainthm}

\primetheorem*

\begin{proof}
First we upper-bound the following sum that appears in the last term of Proposition \ref{prop:main}:
\begin{align}
\sum_{t=1}^T\frac{1}{\sqrt{\chi_1(t)}} 
& = 
 \tau_1 + 
 \sqrt{\frac{2|\ext(\Delta_K^*)|}{a_0}}
 \sum_{\tau = \tau_1+1}^T  \frac{1}{\sqrt{\tau}} \notag \\
& \leq 
 \tau_1 + 
 \sqrt{\frac{2|\ext(\Delta_K^*)|}{a_0}}
 \left[ \sqrt{\tau_1} + \int_{\tau_1}^T \frac{1}{\sqrt{\tau}}d\tau \right]
 \notag \\
& \leq 
 \sqrt{\frac{2|\ext(\Delta_K^*)|}{a_0}}
 \left[ \tau_1 + 2\sqrt{T} -\sqrt{\tau_1}  \right]
 \notag \\
& \leq
 \sqrt{\frac{2|\ext(\Delta_K^*)|}{a_0}}
  \left[
    (2|\ext(\Delta_K^*)|)
    \left[ 2+\tfrac{10\sqrt{3}LKD^2}{g^*} \right]
    \sqrt{2\ln\tfrac{2}{\delta}}
  \right]^4  
  +
  \sqrt{\frac{8|\ext(\Delta_K^*)|}{a_0}}\sqrt{T}
 \notag \\
& \leq
  \frac{(2|\ext(\Delta_K^*)|)^{9/2}}{\sqrt{a_0}}
  \left[ 2+\tfrac{10\sqrt{3}LKD^2}{g^*} \right]^4
  \left(2\ln\tfrac{2}{\delta}\right)^2
  +
  \sqrt{\frac{8|\ext(\Delta_K^*)|}{a_0}}\sqrt{T}
\notag
\end{align}
According to Proposition \ref{prop:main}, using $\delta/4DKT^2$ in place of $\delta$ and recalling \eqref{eq: UB on sum of etas},
\begin{align}
f&\left( \frac{1}{T} \sum_{t=1}^T \balpha^{(t)} \right) - f(\balpha^*) 
\nopagebreak
\\
\le&
L\sqrt{\frac{6D(1+\ln^2 T)\ln\tfrac{8DKT^2}{\delta}}{T}} 
+
\frac{1-1/\sqrt{K}}{\sqrt{2T\ln(2/\delta)}}
+ 
[KD^2+1] 
\frac{\sqrt{2\ln(2/\delta)}}{1-1/\sqrt{K}}\left[\tfrac{2}{\sqrt{T}}-\tfrac{1}{T}\right]
\notag
\\
&+
LK \frac{\sqrt{D\ln\tfrac{8DKT^2}{\delta}}}{\sqrt{2}T}
\frac{(2|\ext(\Delta_K^*)|)^{9/2}}{\sqrt{a_0}}
  \left[ 2+\tfrac{10\sqrt{3}LKD^2}{g^*} \right]^4
  \left(2\ln\tfrac{2}{\delta}\right)^2
+
2LK \sqrt{\frac{D|\ext(\Delta_K^*)|\ln\tfrac{8DKT^2}{\delta}}{a_0T}}
\notag \\
\le&
  2L\sqrt{\frac{6D\ln\tfrac{8DKT^2}{\delta}}{T}} 
  \max
  \left\{
    \ln(2T)
  ,\;
    1+4(KD)^{3/2}
    +
    2K\tfrac{|\ext(\Delta_K^*)|^{9/2}}{\sqrt{a_0}}
    \left[ 2+\tfrac{10\sqrt{3}LKD}{g^*} \right]^4
    (2\ln(2/\delta))
  \right\}
\notag
\end{align}
which completes the proof.
\end{proof}

\section{Regret vs. pseudo regret}
\label{app:regret_pseudo}
\regretcor* 

\begin{proof}
The average cost can be written as
\[
\bar{\bX}^{(T)} 
= \frac{1}{T} \sum_{k=1}^{K} T_k(T) \frac{1}{T_k(T)} \sum_{t=1}^{T} \mathbf{1}(k_t = k) \bX_k^{(t)} 
= \frac{1}{T} \sum_{k=1}^{K} T_k(T) \hat{\bmu}_k^{(T)}
\]
According to Claim 1, with probability at least $1-\delta/2$, it holds
\begin{align}
\| \bar{\bX}^{(T)} - \hat{\bmu}^{(T)} \bar{\balpha}^{(T)} \| & = 
\left\| \frac{1}{T} \sum_{k=1}^{K} T_k(T) \hat{\bmu}_k^{(T)} - \frac{1}{T} \sum_{k=1}^{K} \hat{\bmu}_{k}^{(T)} \sum_{t=1}^{T}\alpha_k^{(t)} \right\| \notag \\
& = \left\| \frac{1}{T} \sum_{k=1}^{K} \hat{\bmu}_k^{(T)} \left( T_k(T) - \sum_{t=1}^{T}\alpha_k^{(t)} \right) \right\| \notag \\
&\le K\sqrt{\frac{2D\ln (4K/\delta)}{T}} \notag
\end{align}
and since GGI is $L$-Lipschitz, with probability at least $1-\delta/2$,
\begin{align}\label{eq:aux1}
\vert G_\bw(\bar{\bX}^{(T)}) - G_\bw(\hat{\bmu}^{(T)} \bar{\balpha}^{(T)} ) \vert \le LK \sqrt{\frac{2D\ln (4K/\delta)}{T}} 
\end{align}
In addition, one can show (see (\ref{eq: division bound 1})) that
\begin{align}
  \prob
  &\left[
    \left\|
      \widehat{\bmu}_k^{(T) } \sum_{\tau=1}^T \alpha_k^{(\tau)} 
    -
      \bmu_k \sum_{\tau=1}^T \alpha_k^{(\tau)}
    \right\|
    >
    \sqrt{5DT\ln(2/\delta)}
  \right]
= (DT+1)\delta \notag
\enspace ,
\end{align}
which implies that with probability $1-\delta/2$ we have
\begin{align} \label{eq:aux2}
\left\|
      \widehat{\bmu}_k^{(T) } \bar{\balpha}^{(T)} 
    -
      \bmu_k \bar{\balpha}^{(T)} 
    \right\|
    \le
    \sqrt{\frac{5D\ln(4(DT+1)/\delta)}{T}}
\end{align}
Hence, the difference of regret and pseudo-regret can be upper-bounded with probability at least $1-\delta$ as
\begin{align}
\vert R^{(T)} -  \overline{R}^{(T)} \vert 
& = 
\vert G_{\bw} \left( \bar{\bX}^{(T)} \right) - f( \bar{\balpha}^{(T)} ) \vert \notag \\
& \le 
\vert f^{(T)} (\bar{\balpha}^{(T)})  - f( \bar{\balpha}^{(T)} ) \vert + LK \sqrt{D\frac{2\ln (4K/\delta)}{T}}  \label{eq:aux4} \\
& \le 
L\left(\sqrt{\frac{5D\ln(4(DT+1)/\delta)}{T}} + K \sqrt{\frac{2D\ln (4K/\delta)}{T}} \right) \label{eq:aux5}
\\
& \le 
L\sqrt{\frac{12D\ln(4(DT+1)/\delta)}{T}} \label{eq:aux6}
\end{align}
where (\ref{eq:aux4}) follows from (\ref{eq:aux1}), (\ref{eq:aux5}) follows from (\ref{eq:aux1}) and
(\ref{eq:aux6}) holds because $DT+1 \ge K$.
\end{proof}

\section{Battery control task}\label{app:battery}

Efficient management of electric batteries leads to better performance and longer battery life, which is important for instance for electric vehicles whose range is still limited compared to those with petrol or diesel engines.
An electric battery is composed of several cells whose capacity varies extensively due to inconsistencies in weight and volume of the active material in their individual components, different internal resistance and higher temperatures leading to different aging rates.
As a consequence, the energy output at any instant is different from each cell, which results in different aging rates and ultimately leads to a premature battery failure. 
To address this problem, a control strategy called {\em cell balancing} is utilized, which aims at maintaining a constant energy level --- mainly state-of-charge (SOC) --- in each cell, while controlling for temperature and aging.
Many cell-balancing controllers can be defined, depending on the importance given to the three objectives: SOC, temperature and aging.
The values of those objectives should be balanced between the cells because a balanced use of all cells leads to a longer lasting system. 
Moreover, those objectives values should also be balanced within a cell, because for example, a cell can have higher capacity on a higher temperature, but at the same time it has a higher risk to explode. 


Our battery control task consists in learning and selecting the ``best'' cell balancing controller in an online fashion, so that it can dynamically adapt to the consumption profile and environment, such as outdoor temperature.
In the case of electric cars, this means that the controller needs to be able to adapt to the driving habits of the driver and to the terrain, such as hilly roads or desert roads. 
In this experiment, our goal was more specifically to test our multi-objective online learning algorithms in the battery control task, and verify that our online learning algorithms can indeed find a policy for this control task that leads to a balanced parameter values of the cells.

The battery is modeled using the internal resistance (Rint) model \cite{Johnson02}.
The estimation of SOC is based on the Ampere Hour Counting method \cite{Dambrowski13}.
The variation of temperature in the system is determined according to the dissipative heat loss due to the internal resistance and thermal convection \cite{GaoChenDougal02}.
Cell degradation or aging is a function of temperature, charging/discharging rates and cumulative charge \cite{Tao12}.
Moreover, the battery model is complemented with $10$ different cell-balancing controllers. 
The whole model is implemented in the Matlab/Simulink software package and
can emulate any virtual situation whose electric consumption is described by a time series, which is given as input to the model.
In practice, such a time series would correspond to a short period of battery usage (e.g., a brief segment of a driving session).
For a chosen controller, the output of the model comprises of the objective values of each battery cell at the end of the simulation. 
Note that the output of the battery model is a multivariate random vector since the power demand is randomized, therefore this control task can be readily accommodated into our setup.
In our experiments, the battery consists of $4$ cells, thus $D= 12$ in this case. 
The cell-balancing controllers correspond to the arms, thus $K=10$. 

The online learning task consists of selecting among these controllers/arms so that the final objective values are as balanced as possible. 
The experiment was carried out as follows: in each iteration, the learner selects a controller according to its policy, then the battery model is run with the selected controller by using a random consumption time series. 
At the end of the simulation, the learner receives the objective values of each cell output by the battery model as feedback and updates its policy. 
The goal of the learner is to find a policy over the controllers, which leads to a balanced state of cells in terms of cumulative value. 


The results are shown in Figure~\ref{fig:battery}. 
In this experiment we do not know the means of the arms, but we estimated them based on $100000$ runs of the simulator for each arm. 
These mean estimates were used for computing the optimal policy and the regret. 
We run the \Algo{MO-OGDE} and \Algo{MO-LP} over $100$ repetitions. 
Their average regret exhibits the same trend as in the synthetic experiments: the \Algo{MO-LP} achieved faster convergence. 
The blue lines shows the regret of the pure policies, which always selects the same arm, i.e., the performance of single strategies. 
It is important to mention that the optimal mixed controller has a lower GGI value since the regret of any arm is positive, and more importantly, the two learners converge to optimal mixed policies in terms of regret. 

\newpage

\end{document}